\definecolor{Gray}{gray}{.25}
\definecolor{violetblue}{rgb}{0.54, 0.17, 0.89}
\tiny\color{gray},
\lstdefinestyle{lfonts}{
  basicstyle   = \footnotesize\ttfamily,
  stringstyle  = \color{purple},
  keywordstyle = \color{blue!60!black}\bfseries,
  commentstyle = \color{olive}\scshape,
}
\lstdefinestyle{lnumbers}{
  numbers     = left,
  numberstyle = \tiny,
  numbersep   = 1em,
  firstnumber = 1,
  stepnumber  = 1,
}
\lstdefinestyle{llayout}{
  breaklines       = true,
  tabsize          = 2,
  columns          = flexible,
}
\lstdefinestyle{lgeometry}{
  xleftmargin      = 20pt,
  xrightmargin     = 0pt,
  frame            = tb,
  framesep         = \fboxsep,
  framexleftmargin = 20pt,
}
\lstdefinestyle{lgeneral}{
  style = lfonts,
  style = lnumbers,
  style = llayout,
  style = lgeometry,
}
\lstdefinestyle{python}{
    language = {Python},
    style    = lgeneral,
}
\newcommand{\cmark}{\ding{51}}%
\newcommand{\xmark}{\ding{55}}%
\numberwithin{equation}{section}
\theoremstyle{definition}
\newtheorem{theorem}{Theorem}
\newtheorem{proposition}[theorem]{Proposition}
\newtheorem{lemma}[theorem]{Lemma}
\newtheorem{claim}[theorem]{Claim}
\theoremstyle{definition}
\theoremstyle{remark}
\newcommand{\dist}{\mathcal{D}}
\newcommand{\xxspace}{\mathcal{X}}
\newcommand{\yyspace}{\mathcal{Y}}
\newcommand{\zzspace}{\mathcal{Z}}
\newcommand{\RR}{\mathbb{R}}
\newcommand{\Exp}{\mathbb{E}}
\newcommand{\HH}{\mathcal{H}}
\newcommand{\FF}{\mathcal{F}}
\newcommand{\Adv}{\text{Adv}}
\DeclareMathOperator*{\argmin}{arg\,min}
\newcommand{\tr}{\text{Tr}}
\newcommand{\defeq}{\vcentcolon=}
\newcommand{\eps}{\varepsilon}
\newcommand{\ind}{\mathbb{I}}
\DeclarePairedDelimiterX{\inp}[2]{\langle}{\rangle}{#1, #2}
\newcommand{\dtv}{d_{\text{TV}}}
\icmltitlerunning{Information Obfuscation of Graph Neural Networks}
\author{Peiyuan Liao\thanks{Equal contribution}\thanks{Carnegie Mellon University. Email: \texttt{peiyuanl@andrew.cmu.edu}}
\and 
Han Zhao\printfnsymbol{1}\thanks{University of Illinois at Urbana-Champaign. Email: \texttt{hanzhao@illinois.edu}}
\and
Keyulu Xu\printfnsymbol{1}\thanks{Massachusetts Institute of Technology. Email: \texttt{keyulu@mit.edu}}
\and
Tommi Jaakkola\thanks{Massachusetts Institute of Technology. Email: \texttt{tommi@csail.mit.edu}}
\and
Geoffrey Gordon\thanks{Carnegie Mellon University. Email: \texttt{ggordon@cs.cmu.edu}}
\and
Stefanie Jegelka\thanks{Massachusetts Institute of Technology. Email:  \texttt{stefje@mit.edu}}
\and
Ruslan Salakhutdinov\thanks{Carnegie Mellon University. Email:   \texttt{rsalakhu@cs.cmu.edu}}
}
\begin{document}

\twocolumn[
\icmltitle{Information Obfuscation of Graph Neural Networks}

\icmlsetsymbol{equal}{*}

\begin{icmlauthorlist}
\icmlauthor{Peiyuan Liao}{equal,cmu}
\icmlauthor{Han Zhao}{equal,uiuc}
\icmlauthor{Keyulu Xu}{equal,mit}\\
\icmlauthor{Tommi Jaakkola}{mit}
\icmlauthor{Geoffrey Gordon}{cmu}
\icmlauthor{Stefanie Jegelka}{mit}
\icmlauthor{Ruslan Salakhutdinov}{cmu}
\end{icmlauthorlist}

\icmlaffiliation{mit}{Massachusetts Institute of Technology (MIT)}
\icmlaffiliation{cmu}{Carnegie Mellon University}
\icmlaffiliation{uiuc}{University of Illinois at Urbana-Champaign}

\icmlcorrespondingauthor{Peiyuan Liao}{peiyuanl@andrew.cmu.edu}
\icmlcorrespondingauthor{Han Zhao}{hanzhao@illinois.edu}
\icmlcorrespondingauthor{Keyulu Xu}{keyulu@mit.edu}

\icmlkeywords{Machine Learning, ICML}

\vskip 0.3in
]

\printAffiliationsAndNotice{\icmlEqualContribution} %

\begin{abstract}
While the advent of Graph Neural Networks (GNNs) has greatly improved node and graph representation learning in many applications, the neighborhood aggregation scheme exposes additional vulnerabilities to adversaries seeking to extract node-level information about sensitive attributes. In this paper, we study the problem of protecting sensitive attributes by information obfuscation when learning with graph structured data. We propose a framework to locally filter out pre-determined sensitive attributes via adversarial training with the total variation and the Wasserstein distance. Our method creates a strong defense against inference attacks, while only suffering small loss in task performance. Theoretically, we analyze the effectiveness of our framework against a worst-case adversary, and characterize an inherent trade-off between maximizing predictive accuracy and minimizing information leakage. Experiments across multiple datasets from recommender systems, knowledge graphs and quantum chemistry demonstrate that the proposed approach provides a robust defense across various graph structures and tasks, while producing competitive GNN encoders for downstream tasks.  
\end{abstract}

\section{Introduction}
\label{sec:intro}
Graph neural networks (GNNs) have brought about performance gains in various tasks involving graph-structured data~\citep{scarselli2009graph, xu2018how}. A typical example includes movie recommendation on social networks~\citep{ying2018graph}. %
Ideally, the recommender system makes a recommendation not just based on the description of an end user herself, but also those of her close friends in the social network. By taking the structured information of friendship in social network into consideration, more accurate prediction is often achieved~\citep{xu2018representation, hamilton2017inductive}. However, with better utility comes more vulnerability to potential information leakage. For example, to gain sensitive information about a specific user in the network, malicious adversaries could try to infer sensitive information not just only based on the information of the user of interest, but also information of her friends in the network. Such scenarios are increasingly ubiquitous with the rapid growth of users in common social network platforms, especially in the distributed/federated setting where data are transmitted from local nodes to centralized servers and the goal of malicious adversaries is to reveal users' sensitive data by eavesdropping during the transmission process. Hence, the above problem poses the following challenge: 
\begin{quoting}
\itshape
\vspace*{-0.2em}
    How could we protect sensitive information of users in the network from malicious inference attacks while maintaining the utility of service? Furthermore, can we quantify the potential trade-off between these two goals?
\vspace*{-0.1em}
\end{quoting}
\begin{figure*}[tb]
    \centering
    \begin{subfigure}[htb]{\linewidth}
        \centering
        \includegraphics[width=0.8\textwidth]{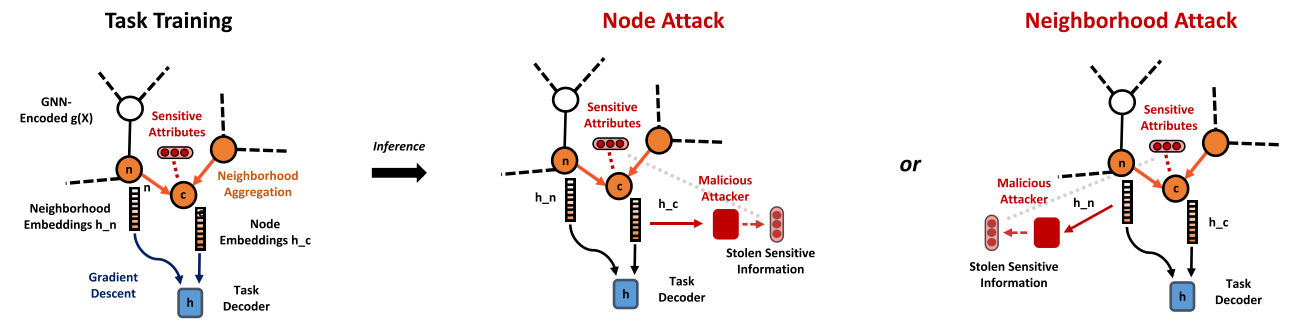}
        \vspace{-0.5em}
        \caption{Information Leakage in Client-Server Threat Model}
    \end{subfigure}%
    
    \begin{subfigure}[htb]{\linewidth}
        \centering
        \includegraphics[width=0.8\textwidth]{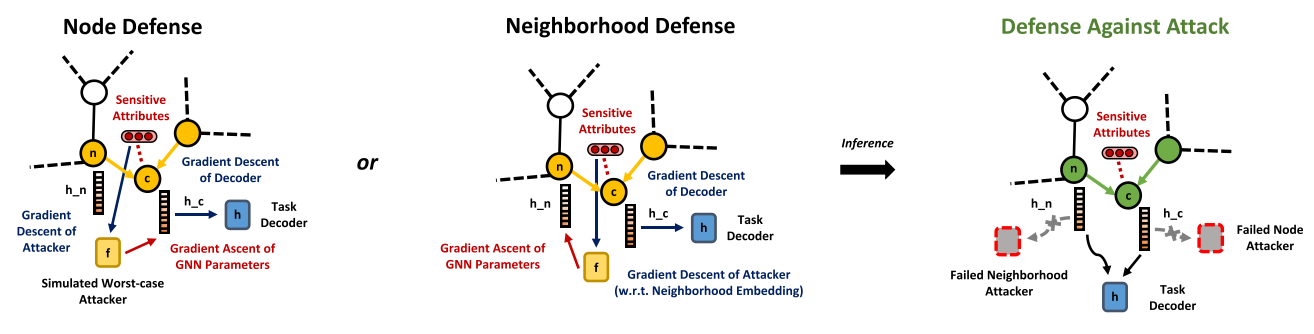}
        \vspace{-0.5em}
        \caption{Information Obfuscation with GAL}
    \end{subfigure}
    
    \caption{\textbf{Graph Adversarial Networks (GAL).} Above represents a common challenge of inference attack present in graph learning algorithms (a) and our proposed solution (b). For the figure above, starting from left to right, we present a way for adversaries to obtain sensitive information from graph neural networks: after training via neighborhood aggregation (1), by running neural networks on produced embedding of nodes (2) and their neighbors (3) in graphs, they are able to recover sensitive information about them. GAL, on the other hand, defends node and neighborhood inference attacks via a minimax game between the task decoder (blue) and a simulated worst-case adversary (yellow) on both the embedding (descent) and the attributes (ascent). Malicious adversaries will have difficulties extracting sensitive attributes at inference time from GNN embeddings trained with our framework. 
    }
    \label{mainvis}
\vspace*{-1em}
\end{figure*}
In this paper, we provide answers to both questions. We propose a simple yet effective algorithm to achieve the first goal through adversarial training of GNNs, a general framework which we term as \textbf{G}raph \textbf{A}dversaria\textbf{L} Networks (GAL). In a nutshell, the proposed algorithm learns node representations in a graph by simultaneously preserving rich information about the target task and filtering information from the representations that is related to the sensitive attribute via a minimax game (Figure~\ref{mainvis}). While the minimax formulation is not new and has been applied broadly in image generation, domain adaptation and robustness~\citep{goh2010distributionally,goodfellow2014generative,madry2017towards,zhao2018adversarial}, we are the first to formulate this problem on graphs for information obfuscation, and to demonstrate that minimax optimization is effective for GNNs in these settings, both theoretically and empirically.

At a high level, for the purpose of information obfuscation of certain sensitive attributes, we propose a method to locally filter out these attributes by learning GNN encoders that could confuse a strong malicious adversary. In particular, the learned GNN encoders will generate node representations from which even a worst-case adversary cannot reveal the sensitive attributes. Technically, we use the total variation and the dual formulation of the Wasserstein distance to form a minimax game between the GNN encoder and the adversary, and train it until convergence. We provide theoretical guarantees for our algorithm, and quantify the potential trade-off between GNN predictive accuracy and information leakage. First, we prove a \emph{lower bound} for the inference error over the sensitive attribute that any worst-case adversary has to incur under our algorithm. Second, we quantify how much one has to pay in terms of predictive accuracy for information obfuscation. Specifically, we prove that the loss in terms of predictive accuracy is proportional to how the target task is correlated with the sensitive attribute in input node features. 

Empirically, we corroborate our theory and the effectiveness of the proposed framework on 6 graph benchmark datasets. We show that our framework can both train a \emph{competitive} GNN encoder and perform \emph{effective information obfuscation}. For instance, our algorithm successfully decreases the AUC of a gender attacker by 10\% on the Movielens dataset while only suffering 3\% in task performance. Furthermore, our framework is robust against a new set of attacks we term ``neighborhood attacks'' or ``$n$-hop attacks'', where the adversary can infer node-level sensitive attributes from embeddings of $1$- and $n$-distant neighbors. We verify that in these new settings, our algorithm remains effective. Finally, our theoretical bounds on the trade-off between accuracy and defense also agree with experimental results.~\footnote{Code at: \url{https://github.com/liaopeiyuan/GAL}}

In summary, we formulate the information obfuscation problem on GNNs. In this new setting, we show that GNNs trained with GAL achieve both good predictive power and information obfuscation, theoretically (Theorem~\ref{thm:guarantee}) and empirically (Table~\ref{task}). Our theory also quantifies the trade-off between accuracy and information leakage (Theorem~\ref{thm:lowerbound}), and is supported by experiments (Figure~\ref{node}).

\section{Preliminaries}
We begin by introducing our notation. Let $\dist$ be a distribution over a sample space $\xxspace$. For an event $E\subseteq\xxspace$, we use $\dist(E)$ to denote the probability of $E$ under $\dist$. Given a feature transformation function $g:\xxspace\to\zzspace$ that maps instances from the input space $\xxspace$ to feature space $\zzspace$, we define $g_\sharp\dist\defeq \dist\circ g^{-1}$ to be the pushforward of $\dist$ under $g$, i.e., for any event $E'\subseteq\zzspace$, $g_\sharp\dist(E') \defeq \dist(\{x\in\xxspace\mid g(x)\in E'\})$. Throughout the paper we assume that the feature space is bounded, i.e., $\sup_{z\in\zzspace}\|z\|\leq R$. For two distributions $\dist$ and $\dist'$ over the same sample space $\Omega$, let $\dtv(\dist, \dist')$ be the total variation (TV) distance between them: $\dtv(\dist, \dist')\defeq \sup_{E\subseteq \Omega} |\dist(E) - \dist'(E)|$. The Wasserstein distance between $\dist$ and $\dist'$ is defined as $W_1(\dist,\dist') \defeq \sup_{\| f \|_L \leq 1} \left|\int_{\Omega} f d\dist - \int_{\Omega} f d\dist'\right|$, where $\|\cdot\|_L$ is the Lipschitz semi-norm of a real-valued function.

\textbf{Graph Neural Networks.} GNNs learn node and graph representations for predictions on nodes, relations, or graphs~\citep{scarselli2009graph}. The input is a graph $G=(V,E)$. Each node $u \in V$ has a feature vector $X_u$, and each edge $(u, v) \in E$ has a feature vector $X_{(u, v)}$.  GNNs iteratively compute a representation for each node.  Initially, the node representations are the node features: $X_u^{(0)} = X_u$.  In iteration $k=1,\ldots,K$, a GNN updates the node representations $X_u^{(k)}$ by aggregating the neighboring nodes' representations~\citep{gilmer2017neural}: $\forall k\in[K], u \in \mathcal{N}(v)$, 
\begin{equation*}
\small
    X_v^{(k)}  = \text{AGGREGATE}^{(k)} \left( \left\lbrace \left( X_u^{(k-1)}, X_v^{(k-1)}, X_{(u, v) } \right)\right\rbrace \right).
\end{equation*}
We can compute a graph representation $X_G$ by pooling the final node representations. Many GNNs, with different aggregation and graph pooling functions, have been proposed~\citep{defferrard2016convolutional, duvenaud2015convolutional, kipf2016semi, hamilton2017inductive, du2019graph, fey2019just,  Xu2020What, xu2021how, velivckovic2017graph,  cai2020graphnorm}.

\textbf{Threat Model.}
While being a powerful paradigm for learning node and graph representations for downstream tasks, GNNs also expose huge vulnerability to adversaries whose goal is to infer sensitive attributes from individual node representations. In light of this, throughout the paper we assume a black-box attack setting, where the adversary does not have direct access to the GNN encoder or knowledge of its parameters. Instead, it can obtain the node representations produced by the GNN encoder, with the goal to reconstruct a sensitive attribute $A_v$ of node $v$ by looking at its node representation $X_v^{(K)}$. Here, $A_v\in\{0, 1\}$\footnote{We assume binary  attributes for ease of exposition. Extension to categorical attributes is straightforward.} could be the user's age, gender, or income etc. The above setting is ubiquitous in \textit{server-client} paradigm where machine learning is provided as a service~\citep{ribeiro2015mlaas}. For example, in the distributed setting, when users' data is transmitted from local nodes to a centralized server, a malicious adversary could try to reveal users' sensitive attributes by eavesdropping the transmitted data. We emphasize that simply removing the sensitive attribute $A_v$ from the embedding $X_v$ is not sufficient, because $A_v$ may be inferred from some different but correlated features.

\textbf{Information Obfuscation.}
We study the problem above from an information-theoretic perspective. Let $\FF_A\defeq \{f: \RR^{d_K}\to\{0, 1\}\}$ denote the set of adversaries who have access to the node representations as input, and then output a guess of the sensitive attribute. Let $\dist$ be a joint distribution over the node input $X$, the sensitive attribute $A$, as well as the target variable $Y$. Let $\dist_a$ be the conditional distribution of $A = a$. We define the \emph{advantage}~\citep{goldwasser1996lecture} of the adversary as 
\begin{equation}
\small
\Adv_\dist(\FF_A)\defeq \sup_{f\in\FF_A} \left| \Pr_{\dist_1}(f(Z) = 1) - \Pr_{\dist_0}(f(Z) = 1)\right|,
\label{equ:adv}
\end{equation}
where $Z$ is the random variable that denotes \textit{node representations} after applying the GNN encoder to input $X$. Here, $f$ represents a particular adversary and the supresum in~\eqref{equ:adv} corresponds to the \textit{worst-case adversary} from a class $\FF_A$. If $\Adv_\dist(\FF_A) = 1$, then there exists an adversary who can almost surely guess the sensitive attribute $A$ by looking at the GNN representations $Z$. Hence, our goal is to design an algorithm that learns GNN representations $Z$ such that $\Adv_\dist(\FF_A)$ is small, which implies successful defense against adversaries. Throughout the paper, we assume that $\FF_A$ is symmetric, i.e., if $f\in\FF_A$, then $1 - f\in\FF_A$ as well.

\section{Information Obfuscation via Adversarial Training}
\label{sec:blgnn}
In this section, we first relate the aforementioned \textit{advantage} of the adversaries to a quantity that measures the \emph{ability of adversaries} in predicting a sensitive attribute $A$ by looking at the node representations $Z = X^{(K)}$. Inspired by this relationship, we proceed to introduce a minimax game between the GNN encoder and the worst-case adversary. We then extend this approach by considering adversaries that use score functions induced by the Wasserstein distance. To conclude this section, we analyze the trade-off between accuracy and information leakage, and provide theoretical guarantees for our algorithm.  

Given a symmetric set of adversaries $\FF_A$, we observe that $1 - \Adv_\dist(\FF_A)$ corresponds to the sum of best Type-I and Type-II \textit{inference error} any adversary from $\FF_A$ could attain:
\begin{equation*}
\small
1 - \Adv_\dist(\FF_A) =\inf_{f\in\FF_A}\left(\Pr_{\dist_0}(f(Z) = 1) + \Pr_{\dist_1}(f(Z) = 0)\right).
\end{equation*}
Hence, to minimize $\Adv_\dist(\FF_A)$, a natural strategy is to learn parameters of the GNN encoder so that it filters out the sensitive information in the node representation while still tries to preserve relevant information w.r.t.\ the target task of inferring $Y$. In more detail, let $Z = X^{(K)} = g(X)$ be the node representations given by a GNN encoder. We then propose the following unconstrained optimization problem with a trade-off parameter $\lambda > 0$ for information obfuscation:
\begin{equation}
    \begin{aligned}
    \min_{h, Z = g(X)}\max_{f\in\FF_A}\quad \eps_Y(h(Z)) - \lambda\cdot\eps_A(f(Z))
\end{aligned}
\label{equ:opt}
\end{equation}
Here we use $\eps_Y(\cdot)$ and $\eps_A(\cdot)$ to denote the cross-entropy error on predicting target task $Y$ and sensitive attribute $A$ respectively. In~\eqref{equ:opt}, $h$ is a parametrized network over the node representations $X^{(K)}$ for downstream task, and $f$ is the adversary. Note that the above formulation is different from the poisoning attacks on graphs literature~\citep{zugner2019certifiable}, where their goal is to learn a robust classifier under graph perturbation. It is also worth pointing out that the optimization formulation in~\eqref{equ:opt} admits an interesting game-theoretic interpretation, where two agents $f$ and $g$ play a game whose score is defined by the objective function in~\eqref{equ:opt}. Intuitively, $f$ could be understood as a simulated adversary seeking to minimize the sum of Type-I and Type-II inference error while the encoder $g$ plays against $f$ by learning features to remove information about the sensitive attribute $A$. Clearly, the hyperparameter $\lambda$ controls the trade-off between accuracy and information leakage. On one hand, if $\lambda\to 0$, we barely care about the defense of $A$ and devote all the focus to minimize the predictive error. On the other extreme, if $\lambda\to\infty$, we are only interested in defending against the potential attacks. 

\textbf{Wasserstein Variant}~~In practice, one notable drawback of the optimization formulation in~\eqref{equ:opt} is that the training of the adversary is unstable (cf. Fig.~\ref{fig:distance}), due to the fact that the TV distance is not continuous w.r.t.\ the model parameters of the encoder~\citep{arjovsky2017wasserstein}. Inspired by the Wasserstein GAN~\citep{arjovsky2017wasserstein}, we also propose a method for information obfuscation on GNNs with the Wasserstein distance, using its dual formulation. 
\begin{equation*}
\small
    \begin{aligned}
    \min_{h, Z = g(X)}\max_{\|f\|_L\leq 1}\quad \eps_Y(h(Z)) + \lambda \bigg|\int f(Z)~d\dist_0 - \int f(Z)~d\dist_1\bigg|
\end{aligned}
\label{equ:wopt}
\end{equation*}
The connection between~\eqref{equ:opt} and the above Wasserstein formulation lies in choice of the adversaries, as can be observed from the comparison between $\Adv_\dist(\FF_A)$ (advantage) and $W_1(\cdot, \cdot)$ (Wasserstein distance). In the latter case, the adversaries are constrained to be Lipschitz continuous. In both cases, we present the pseudocode in Fig.~\ref{pseudo} and Alg.~\ref{alg:gal}, which also includes extensions to \textit{neighborhood} and \textit{$n$-hop} attacks (cf.\ Section~\ref{section:3}).

\begin{figure}[!t]
\begin{lstlisting}[style = python]
z = encoder(graph)
z_rev = grad_reverse(z)
task_pred, adv_pred = decoder(z), attacker(z_rev)
loss_task = task_loss_f(task_pred, task_gt)
loss_adv = adv_loss_f(adv_pred, adv_gt)
if t % n == 0
    optimizer_adv.zero_grad()
    loss_adv.backward()
    optimizer_adv.step()
else:
    optimizer_task.zero_grad()
    loss_task.backward()
    optimizer_task.step()
\end{lstlisting}
\caption{Pseudocode for the core implementation of GAL (TV distance, node attack). Complete version is in Alg.~\ref{alg:gal}. }
\label{pseudo}
\vspace*{-1em}
\end{figure}

\subsection{Trade-off between Predictive Accuracy and Information Obfuscation}
\label{sec:trade-off}
The first term in the objective function of~\eqref{equ:opt} acts as an incentive to encourage GNN encoders to preserve task-related information. But will this incentive lead to the leakage of the sensitive attribute $A$? As an extreme case if the target variable $Y$ and the sensitive attribute $A$ are perfectly correlated, then it should be clear that there is a trade-off in achieving accuracy and preventing information leakage. Next, we provide an analysis to quantify this inherent trade-off. Theorem~\ref{thm:lowerbound} characterizes a trade-off between the cross-entropy error of task predictor and the advantage of the adversaries:
\begin{restatable}{theorem}{lowerbound}
Let $Z$ be the node representations produced by a GNN $g$ and $\FF_A$ be the set of all binary predictors. Define $\delta_{Y\mid A}\defeq |\Pr_{\dist_0}(Y=1) - \Pr_{\dist_1}(Y = 1)|$. Then for a classifier $h$ such that $\|h\|_L\leq C$, 
\begin{align}
    & \eps_{Y\mid A = 0}(h\circ g) + \eps_{Y\mid A = 1}(h\circ g) \nonumber\\
\geq     & ~\delta_{Y\mid A} - C\cdot W_1(g_\sharp\dist_0, g_\sharp\dist_1) \nonumber\\
\geq    &  ~\delta_{Y\mid A} - 2RC\cdot \Adv_\dist(\FF_A).
\end{align}
\label{thm:lowerbound}
\vspace*{-2em}
\end{restatable}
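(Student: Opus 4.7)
My plan is to reduce the theorem to two facts chained together by the triangle inequality: (a) each conditional cross-entropy lower-bounds the ``calibration gap'' between the prediction's mean and the base rate under $\dist_a$, and (b) the difference of these means across $a\in\{0,1\}$ is controlled first by the Wasserstein distance (via Kantorovich–Rubinstein), and then by the advantage (via a comparison of $W_1$ with total variation on a bounded domain).

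For step (a), I would first establish the pointwise inequality $-y\log q-(1-y)\log(1-q)\ge |y-q|$ for $y\in\{0,1\}$, $q\in[0,1]$. Both cases reduce to the elementary $1-t\le e^{-t}$ (so $-\log(1-t)\ge t$ and $-\log t\ge 1-t$). Taking $\Exp_{\dist_a}$ of both sides with $y=Y$, $q=h(g(X))$ and applying Jensen to pull the absolute value out of the expectation gives
\begin{equation*}
\eps_{Y\mid A=a}(h\circ g)\;\ge\;\bigl|\Pr\nolimits_{\dist_a}(Y=1)-\Exp_{\dist_a}[h(g(X))]\bigr|,\qquad a\in\{0,1\}.
\end{equation*}
For step (b), because $h$ is $C$-Lipschitz, the map $h/C$ is $1$-Lipschitz and Kantorovich–Rubinstein gives
\begin{equation*}
\bigl|\Exp_{g_\sharp\dist_0}[h]-\Exp_{g_\sharp\dist_1}[h]\bigr|\;\le\;C\cdot W_1(g_\sharp\dist_0,g_\sharp\dist_1).
\end{equation*}
Adding the two (a)-inequalities for $a=0,1$, and inserting $\pm\Exp_{\dist_a}[h(g(X))]$ inside $\delta_{Y\mid A}=|\Pr_{\dist_0}(Y=1)-\Pr_{\dist_1}(Y=1)|$ via the triangle inequality, yields the first stated bound.

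For the second bound, I would show $W_1(g_\sharp\dist_0,g_\sharp\dist_1)\le 2R\cdot \Adv_\dist(\FF_A)$ and then multiply by $C$. Since $\FF_A$ is all binary predictors, every $f\in\FF_A$ is an indicator $\mathbbm{1}_E$ of a measurable set $E\subseteq\zzspace$, so the sup in the definition of $\Adv$ reproduces the TV distance: $\Adv_\dist(\FF_A)=\dtv(g_\sharp\dist_0,g_\sharp\dist_1)$. Then for any $1$-Lipschitz $f$ on the ball of radius $R$, one can subtract a constant (this leaves the integral against the signed measure $g_\sharp\dist_0-g_\sharp\dist_1$ unchanged because it has zero total mass) to arrange $\mathrm{osc}(f)\le 2R$, and the standard bound $|\int f\,d(\mu-\nu)|\le \mathrm{osc}(f)\cdot\dtv(\mu,\nu)$ (via the Hahn decomposition) finishes it.

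The only delicate step is (a): the pointwise inequality $\ell(y,q)\ge|y-q|$ is not completely standard and is what makes the chain tight. Everything else is a triangle inequality plus two textbook dualities (Kantorovich–Rubinstein for $W_1$ and the oscillation-TV bound), so the main obstacle is really bookkeeping — in particular, making sure $h$ is interpreted as a $[0,1]$-valued probability so the cross-entropy loss is finite and the pointwise inequality applies, and being careful that the Lipschitz constant $C$ is the one associated with this probability output rather than a logit.
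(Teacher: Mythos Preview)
Your proof is correct and follows the same skeleton as the paper's: both split $\delta_{Y\mid A}$ by the triangle inequality into two ``calibration'' terms plus a cross term, bound the cross term by $C\cdot W_1(g_\sharp\dist_0,g_\sharp\dist_1)$ via Kantorovich--Rubinstein (equivalently, Wasserstein contraction under a $C$-Lipschitz map), and then pass from $W_1$ to $2R\cdot\Adv_\dist(\FF_A)$ by identifying $\Adv_\dist(\FF_A)=\dtv(g_\sharp\dist_0,g_\sharp\dist_1)$ and using the diameter-$2R$ oscillation bound.

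The one substantive difference is your step~(a). The paper treats $h$ as a \emph{hard} $\{0,1\}$-valued classifier, writes the calibration term as $W_1\bigl(\dist(Y\mid A=a),(h\circ g)_\sharp\dist_a\bigr)=\bigl|\Pr_{\dist_a}(Y=1)-\Pr_{\dist_a}(h(g(X))=1)\bigr|$ (Bernoulli $W_1$), bounds this by the $0$--$1$ error $\Pr_{\dist_a}(Y\neq h(g(X)))$, and then invokes ``cross-entropy upper-bounds $0$--$1$ loss.'' Your route --- the pointwise inequality $-y\log q-(1-y)\log(1-q)\ge|y-q|$ for $q\in[0,1]$, then Jensen --- keeps $h$ as a soft $[0,1]$-valued probability throughout. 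This buys you internal consistency that the paper glosses over: a nonconstant $\{0,1\}$-valued $h$ cannot be $C$-Lipschitz for finite $C$, and cross-entropy on hard outputs is $0$ or $+\infty$, so the paper's phrasing of step~(a) and its use of $\|h\|_L\le C$ sit uneasily together. Your closing caveat about interpreting $h$ as a probability output (not a logit) is exactly the right diagnosis.
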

\paragraph{Remark.}
Recall that $R$ is a bound of the radius of the feature space, i.e., $\sup_{z\in\zzspace}\|z\|\leq R$. First, $\eps_{Y\mid A = a}(h\circ g)$ denotes the conditional cross-entropy error of predicting $Y$ given $A = a\in\{0, 1\}$. Hence the above theorem says that, up to a certain threshold given by $\delta_{Y\mid A}$ (which is a task-specific constant), any target predictor based on the features given by GNN $g$ has to incur a large error on at least one of the sensitive groups. Specifically, the smaller the adversarial advantage $\Adv_\dist(\FF_A)$ or the Wasserstein distance $W_1(g_\sharp\dist_0, g_\sharp\dist_1)$, the larger the error lower bound. The lower bound in Theorem~\ref{thm:lowerbound} is \textit{algorithm-independent}, and considers the strongest possible adversary, hence it reflects an \emph{inherent} trade-off between task utility and information obfuscation. Moreover,  Theorem~\ref{thm:lowerbound} does not depend on the marginal distribution of the sensitive attribute $A$. Instead, all the terms in our result only depend on the conditional distributions given $A = 0$ and $A = 1$. As a simple corollary, the overall error also admits a lower bound:
\begin{restatable}{corollary}{glowerbound}
Assume the conditions in Theorem~\ref{thm:lowerbound} hold. Let $\alpha\defeq \Pr_\dist(A = 0)$, then
\begin{align*}
\small
\eps_{Y}(h\circ g) &\geq \min\{\alpha, 1-\alpha\}\big(\delta_{Y\mid A} - C\cdot W_1(g_\sharp\dist_0, g_\sharp\dist_1)\big) \\
& \geq \min\{\alpha, 1-\alpha\}\big(\delta_{Y\mid A} - 2RC\cdot \Adv_\dist(\FF_A)\big).
\end{align*}
\label{cor:lowerbound}
\vspace*{-2em}
\end{restatable}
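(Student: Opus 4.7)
The plan is to reduce Corollary~\ref{cor:lowerbound} directly to Theorem~\ref{thm:lowerbound} via the law of total probability and a simple convex-combination inequality, so the main content is essentially an averaging argument rather than a new probabilistic estimate.

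First, I would decompose the unconditional cross-entropy error using the tower property:
\begin{equation*}
\eps_Y(h\circ g) \;=\; \alpha\cdot\eps_{Y\mid A=0}(h\circ g) \;+\; (1-\alpha)\cdot\eps_{Y\mid A=1}(h\circ g),
\end{equation*}
which is valid since $A$ is binary with $\Pr_\dist(A=0)=\alpha$. Next, I would use the fact that both conditional cross-entropies are nonnegative to obtain the elementary bound
\begin{equation*}
\alpha\,a + (1-\alpha)\,b \;\geq\; \min\{\alpha,1-\alpha\}\,(a+b)
\qquad \text{for all } a,b\geq 0,
\end{equation*}
applied with $a=\eps_{Y\mid A=0}(h\circ g)$ and $b=\eps_{Y\mid A=1}(h\circ g)$. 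This reduces the problem to lower-bounding the sum of the two conditional errors, which is exactly what Theorem~\ref{thm:lowerbound} supplies.

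Finally, I would plug in Theorem~\ref{thm:lowerbound} to obtain
\begin{equation*}
\eps_Y(h\circ g)\;\geq\;\min\{\alpha,1-\alpha\}\bigl(\delta_{Y\mid A}-C\cdot W_1(g_\sharp\dist_0,g_\sharp\dist_1)\bigr),
\end{equation*}
and then invoke the second inequality of Theorem~\ref{thm:lowerbound} (i.e.\ $W_1(g_\sharp\dist_0,g_\sharp\dist_1)\leq 2R\cdot\Adv_\dist(\FF_A)$) to get the second stated bound. One small point I would remark on for completeness is that if the right-hand side in Theorem~\ref{thm:lowerbound} happens to be negative, the corollary is trivially true because $\eps_Y(h\circ g)\geq 0$; so the inequality holds in all cases without additional assumptions on the sign of $\delta_{Y\mid A}-C\cdot W_1(\cdot,\cdot)$.

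There is no real obstacle here: the statement is a routine corollary and the only subtlety is making sure the convex-combination step uses $\min\{\alpha,1-\alpha\}$ rather than, say, $\alpha(1-\alpha)$, and that nonnegativity of the cross-entropy (not of the inner bound) is what is being invoked. No symmetry or smoothness assumptions on $h$ or $g$ beyond those already inherited from Theorem~\ref{thm:lowerbound} are needed.
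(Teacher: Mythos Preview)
Your proposal is correct and essentially identical to the paper's own proof: decompose $\eps_Y(h\circ g)$ as $\alpha\,\eps_{Y\mid A=0}(h\circ g)+(1-\alpha)\,\eps_{Y\mid A=1}(h\circ g)$, lower-bound the convex combination by $\min\{\alpha,1-\alpha\}$ times the sum, and invoke Theorem~\ref{thm:lowerbound}. Your added remark about the trivially-true case when the right-hand side is negative is a nice completeness note that the paper omits.
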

Our lower bounds in Theorem~\ref{thm:lowerbound} and Corollary~\ref{cor:lowerbound} capture the price we have to pay for obfuscation.

\subsection{Guarantees Against Information Leakage}
Next, we provide guarantees for information obfuscation  using~\eqref{equ:opt}. The analysis on the optimization trajectory of a general non-convex-concave game~\eqref{equ:opt} is still an active area of research~\citep{daskalakis2018limit,nouiehed2019solving} and hence beyond the scope of this paper. Therefore, we assume that we have access to the minimax stationary point solution of~\eqref{equ:opt}, and focus on understanding how the solution of~\eqref{equ:opt} affects the effectiveness of our defense. 

In what follows we analyze the true error that a worst-case adversary has to incur in the limit, when both the task classifier and the adversary have unlimited capacity, i.e., they can be any randomized functions from $\zzspace$ to $\{0, 1\}$. To this end, we also use the population loss rather than the empirical loss in our objective function. Under such assumptions, given any node embedding $Z$ from a GNN $g$, the worst-case adversary is the \emph{conditional mean}: $\min_{f\in\FF_A}\eps_A(f\circ g) = H(A\mid Z),~\argmin_{f\in\FF_A}\eps_A(f\circ g) = \Pr(A = 1\mid Z)$.
It follows from a symmetric argument that $\min_{h\in\HH}\eps_Y(h\circ g) = H(Y\mid Z)$. Hence we can  simplify the objective \eqref{equ:opt} to the following form where the only variable is the embedding $Z$:
\begin{equation}
    \underset{Z = g(X)}{\min}\quad H(Y\mid Z) - \lambda\cdot H(A\mid Z)
    \label{equ:feature}
\end{equation}
We can now analyze the error that has to be incurred by the worst-case adversary:
\begin{restatable}{theorem}{guarantee}
    Let $Z^*$ be the optimal GNN node embedding of \eqref{equ:feature}. Define $\alpha\defeq \Pr_\dist(A = 0)$, $H^*\defeq H(A\mid Z^*)$ and $W_1^* \defeq W_1(Z^*\mid A = 0, Z^*\mid A = 1)$. Then 1). For any adversary $f:\zzspace\to\{0, 1\}$, $\Pr(f(Z)\neq A) \geq H^*/ 2\lg(6/H^*)$, 2). For any Lipschitz adversary $f$ such that $\|f\|_L\leq C$, $\Pr(f(Z)\neq A) \geq \min\{\alpha, 1-\alpha\}(1 - CW_1^*)$.
    \label{thm:guarantee}
\end{restatable}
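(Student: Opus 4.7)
The two bounds come from different classical tools: Fano's inequality for part (1) and Kantorovich--Rubinstein duality for part (2). For part (1), since $A$ is binary, Fano's inequality (combined with the data-processing step $H(A\mid Z^{*})\leq H(A\mid f(Z^{*}))$) collapses to $H^{*}\leq H_{b}(p_{e})$, where $p_{e}\defeq\Pr(f(Z^{*})\neq A)$ and $H_{b}$ is the binary entropy. To invert this into the claimed explicit form, I would use the elementary upper bound $H_{b}(p)\leq p\lg(e/p)$ on $[0,1/2]$, which itself follows from the inequality $-(1-p)\ln(1-p)\leq p$. Substituting $p = H^{*}/(2\lg(6/H^{*}))$, which lies in $[0,1/4]$ since $H^{*}\leq 1$ forces $\lg(6/H^{*})\geq\lg 6 > 2$, the target inequality $H_{b}(p)\leq H^{*}$ reduces after rearrangement to $H^{*}\lg(6/H^{*})\leq 18/e$. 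The left-hand side is increasing in $H^{*}$ on $(0,1]$ and attains at most $\lg 6\approx 2.585 < 18/e$, so the inequality holds; monotonicity of $H_{b}$ on $[0,1/2]$ then yields $p_{e}\geq H^{*}/(2\lg(6/H^{*}))$.

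For part (2), the plan is to invoke the Kantorovich--Rubinstein dual formulation of $W_{1}$: for any $C$-Lipschitz $f$,
\begin{equation*}
\left|\Exp_{\dist_{0}}[f(Z^{*})] - \Exp_{\dist_{1}}[f(Z^{*})]\right|\leq CW_{1}^{*}.
\end{equation*}
Writing $p_{a}\defeq\Exp_{\dist_{a}}[f(Z^{*})]\in[0,1]$, the error decomposes as $\Pr(f(Z^{*})\neq A) = \alpha p_{0} + (1-\alpha)(1-p_{1})$. I would then minimize this linear functional over $(p_{0},p_{1})\in[0,1]^{2}$ subject to $|p_{0}-p_{1}|\leq CW_{1}^{*}$: the minimum is attained by pushing $p_{0}$ toward $0$ and $p_{1}$ toward $1$ until the constraint saturates, and a short case split on whether $\alpha$ is above or below $1/2$ delivers the claimed value $\min\{\alpha,1-\alpha\}(1-CW_{1}^{*})$.

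The main obstacle is the Fano inversion in part (1): one needs an upper bound on $H_{b}$ that is simultaneously tight enough to yield a meaningful lower bound on $p_{e}$ and clean enough to invert in closed form. The logarithmic slack encoded in $\lg(6/H^{*})$---rather than a bare $\lg(1/H^{*})$---is exactly what makes the algebra close without resorting to separate arguments for small versus large $H^{*}$. Part (2), by contrast, is essentially bookkeeping once Kantorovich--Rubinstein duality is in hand.
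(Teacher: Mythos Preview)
Your proposal is correct and, for part (1), essentially identical to the paper's argument: both routes reduce to Fano's inequality in the binary form $H^{*}\leq H_{b}(p_{e})$ via data processing, then invert. The paper simply cites \citet{calabro2009exponential} for the bound $H_{2}^{-1}(s)\geq s/(2\lg(6/s))$, whereas you re-derive it from $H_{b}(p)\leq p\lg(e/p)$ and the explicit substitution; your algebra checks out (the key step $H^{*}\lg(6/H^{*})\leq \lg 6 < 18/e$ on $(0,1]$ is correct since the left side is increasing there).

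For part (2) the two arguments differ in packaging, though both rest on Kantorovich--Rubinstein. The paper first coarsens $\Pr(f(Z)\neq A)\geq \min\{\alpha,1-\alpha\}\big(\Pr_{\dist_0}(f(Z)=1)+\Pr_{\dist_1}(f(Z)=0)\big)$, then identifies each summand with a $W_{1}$ distance between Bernoulli laws and applies the triangle inequality for $W_{1}$ together with the contraction lemma $W_{1}(f_{\sharp}\dist_0,f_{\sharp}\dist_1)\leq C W_{1}^{*}$ to get $p_{0}+(1-p_{1})\geq 1-CW_{1}^{*}$. You instead keep the exact decomposition $\alpha p_{0}+(1-\alpha)(1-p_{1})$, invoke KR duality once to obtain the linear constraint $|p_{0}-p_{1}|\leq CW_{1}^{*}$, and solve the resulting two-variable linear program. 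Your route is slightly more direct and avoids the detour through pushforward $W_{1}$ distances and the triangle inequality; the paper's route makes the parallel with Theorem~\ref{thm:lowerbound} more visible, since it reuses the same ``$W_{1}$ triangle plus contraction'' machinery. Both land on the same bound.
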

Theorem~\ref{thm:guarantee} shows that whenever the conditional entropy $H^* = H(A\mid Z^*)$ is large or the Wasserstein distance $W_1^*$ is small, the inference error incurred by any (randomized) adversary has to be large. It is worth pointing out that when $W_1^* = 0$, the second inference error lower bound reduces to $\min\{\alpha, 1-\alpha\}$, which is attained by an adversary that uses constant prediction of the sensitive attribute $A$, i.e., this adversary always guesses the majority value of $A$. Hence, Theorem~\ref{thm:guarantee} justifies the use of GAL for information obfuscation. As a final note, recall that the representations $Z$ appearing in the bounds above depend on the \emph{graph structure}, and the inference error in Theorem~\ref{thm:guarantee} is over the representations $Z$. Together, this suggests that the defense could potentially be applied against \textit{neighborhood attacks}, which we provide in-depth empirical validation in Section~\ref{section:3}.

\section{Experiments}

\begin{algorithm}[t!]
    \caption{Full description of GAL, \color{orange} TV \color{black} distance and \color{blue} Wasserstein \color{black} distance.  {The node pairing policy ${Q}$ decides how the predicted node-level sensitive attributes $A_v$ matches against the targets $A_w$ for all nodes in the graph: in node-level obfuscation, it is simply the identity transformation; for neighborhood and N-Hop attack, pairing is done with the proposed Monte-Carlo probabilistic algorithm on $G$. Note that for the expository analysis above , ${L}_{\text{adversary}} $ is equivalent to $ \eps_A$. }
    }
    \label{alg:gal}
    \begin{algorithmic}
    \STATE  {\bfseries Input: }{$G = (V, E)$: input graph; $Y$: node-level target; $A$: node-level sensitive attributes; $g$: GNN encoder; $h$: task decoder; $f$: worst-case adversary; ${L}_{\text{task}}$: task loss; ${L}_{\text{adversary}}$: adversary loss; ${Q}$: node-pairing policy }
    \STATE  {\bfseries Input: }{$n$: training iterations; $\alpha$: learning rate; $m$: batch size; $n_a$: adversary per task; $\lambda$: trade-off parameter}
    \STATE  {\bfseries Input: }{$\theta_g, \theta_h, \theta_f$: corresponding parameters for network}
    \STATE  {\bfseries Input: \color{blue} (Required by Wasserstein) } \color{blue} {$c$: clipping parameter; $C_A$: classes in $A$} \color{black}
      \STATE $t = 0$ 
       \REPEAT
      \STATE $z  = g(V, E)$ \COMMENT{via AGGREGATE}
      \STATE $z_{\text{rev}}  = \text{GradReverse}(z)$\;\;\quad\quad\citep{ganin2016domain}
      \STATE $Y'  = h(z)$
      \STATE $A'  = f(z_{\text{rev}})$
     
     \IF{$t \bmod n_a = 0$} 
      \STATE ${B} = (A_p, A'_p) = {Q}(A, A')$
      \STATE  \color{orange} (TV): $l$ $ = {L}_{\text{adversary}} (A_p, A'_p)$ \color{black}, 
      \STATE \color{blue}  (Wasserstein): $\overline{p} = \text{mean}(\{ (A_{p}^{'(a)}) \odot {1}_{A_p = a} \}:{a \in C_A} ) $
      \STATE $l$ $ = | \max_{a \in C_A} (\overline{p}_a) - \min_{a \in C_A} (\overline{p}_a) |$ \color{black} 
      \STATE $g_{\text{g}}, g_{\text{f}}  = \nabla_g l, \nabla_f l$ 
      \STATE $\theta_{\text{g}}, \theta_{\text{f}}  = \theta_{\text{g}} + \lambda \cdot \text{SGD}(\theta_g, g_g) , \theta_{\text{f}} + \lambda \cdot \text{SGD}(\theta_f, g_f) $
      \STATE \color{blue} (Wasserstein): $\theta_f  = \text{clip}(\theta_f, -c, c)$ \color{black} 
     \ELSE
       \STATE $g_{\text{g}}, g_{\text{h}}  = \nabla_g {L}_{\text{task}} (Y, Y'), \nabla_h {L}_{\text{task}} (Y, Y')$
       \STATE $\theta_{\text{g}}, \theta_{\text{h}}  = \theta_{\text{g}} + \alpha \cdot \text{SGD}(\theta_g, g_g) , \theta_{\text{h}} + \alpha \cdot \text{SGD}(\theta_h, g_h)$
     \ENDIF
      \STATE $t  = t + 1$ 
     \UNTIL {$t >= n$}
    \end{algorithmic}
\end{algorithm}
  
\begin{table*}[tb]
\vspace{-0.1in}
  \centering
  \caption{\small \textbf{Performance of prediction and obfuscation on Movielens-1M}. We first perform link prediction as the main task, then probe with adversaries performing node classification tasks. The statistics are reported as \{adversary performance, F1/AUC\}/\{task performance, RMSE\}. Best-performing methods and best obfuscation results are highlighted in bold in the first table. Since \citet{bose2019compositional}'s method does not allow tradeoff tuning, incompatible fields are left unfilled.}
  \label{task2}
  \resizebox{0.9\linewidth}{!}{%
    \begin{tabular}{ l | c c c c c }
    \toprule
    Encoder & $\lambda$ & Gender-F1/Task-RMSE & Gender-AUC/Task-RMSE & Age-F1/Task-RMSE & Occupation-F1/Task-RMSE \\ [0.5ex] 
    \midrule
    ChebNet-TV         & 0.5 & 0.617 / 0.881 & 0.642 / 0.881 & 0.141 / 0.875 & 0.044 / 0.868 \\
    ChebNet-W          & 0.5 & \textbf{0.593} / 0.872 & \textbf{0.605} / 0.872 & \textbf{0.137} / 0.888 & \textbf{0.033} / 0.908 \\
    GraphSAGE-TV       & 0.5 & 0.679 / 0.866 & 0.680 / 0.866 & 0.236 / 0.869 & 0.055 / 0.871 \\
    GraphSAGE-W        & 0.5 & 0.691 / 0.893 & 0.715 / 0.893 & 0.226 / 0.901 & 0.050 / 0.916 \\
    \midrule
    Bose and Hamilton  &   & 0.667 / 0.874 & 0.678 / 0.874 & 0.188 / 0.874 & 0.051 / 0.874 \\\bottomrule
    \end{tabular}}

\vspace{0.5em}
    
  \begin{minipage}{.5\linewidth}
      \centering
      \resizebox{.95\linewidth}{!}{
        \begin{tabular}{ l | c c c c c }
        \toprule
        ChebNet-TV         & 0 & 0.692 / 0.852 & 0.707 / 0.852 & 0.288 / 0.852 & 0.078 / 0.851 \\
        ChebNet-W          & 0 & 0.693 / 0.852 & 0.707 / 0.852 & 0.286 / 0.852 & 0.077 / 0.851 \\
        GraphSAGE-TV       & 0 & 0.728 / 0.849 & 0.735 / 0.849 & 0.293 / 0.849 & 0.080 / 0.851 \\
        GraphSAGE-W        & 0 & 0.724 / 0.849 & 0.734 / 0.849 & 0.293 / 0.849 & 0.081 / 0.851 \\
        \bottomrule
        \end{tabular}}
    \end{minipage}%
    \begin{minipage}{.5\linewidth}
      \centering
        \resizebox{.95\linewidth}{!}{
        \begin{tabular}{ l | c c c c c }
        \toprule
        ChebNet-TV         & 4 & 0.505 / 1.280 & 0.532 / 1.280 & 0.123 / 1.301 & 0.016 / 1.241 \\
        ChebNet-W          & 4 & 0.485 / 1.258 & 0.526 / 1.258 & 0.104 / 1.296 & 0.025 / 1.353 \\
        GraphSAGE-TV       & 4 & 0.675 / 0.900 & 0.683 / 0.900 & 0.200 / 0.904 & 0.050 / 0.898 \\
        GraphSAGE-W        & 4 & 0.471 / 0.970 & 0.516 / 0.970 & 0.074 / 1.080 & 0.010 / 1.117 \\
        \bottomrule
        \end{tabular}}
\end{minipage} 

  \end{table*}

In this section, we demonstrate the effectiveness of GAL for information obfuscation on graphs. Specifically, we address the following three questions: 
\begin{itemize}[leftmargin=2.5em,noitemsep,nolistsep]
  \item[\ding{212}] \ref{section:1}: Is GAL effective across different tasks, distances and GNN encoder architectures? How do the TV and Wasserstein variants compare during training?
  \item[\ding{212}] \ref{section:2}: What is the landscape of the tradeoff with respect to the hyperparameter $\lambda$?
  \item[\ding{212}] \ref{section:3}: Can GAL also defend neighborhood and $n$-hop attacks?
\end{itemize}

We empirically confirm all three questions. To stress test the robustness of GAL, we consider a variety of tasks and GNN encoder architectures.  Specifically, we experiment on 5 link-prediction benchmarks (Movielens-1M, FB15k-237, WN18RR, CiteSeer, Pubmed) and 1 graph regression benchmark (QM9), which covers both obfuscation of \emph{single} and \emph{multiple} attributes. More importantly, our goal
is not to challenge state-of-the-art training schemes, but to observe the effect in reducing adversaries' accuracies while maintaining good performance of the downstream task. Our experiments can be classified into three major categories.

\textbf{Robustness.}~~We run GAL on all six datasets under a variety of GNN architectures, random seeds, distances and trade-off parameter $\lambda$. We report average performance over five runs. We perform ablation study on widely used GNN architectures \citep{velivckovic2017graph, kipf2016semi, compgcn, defferrard2016convolutional, gilmer2017neural}, and select the best-performing GNNs in the task: CompGCN is specifically designed for knowledge-graph-related applications.

\begin{table}[tb]
\vspace*{-1.0em}
\small
  \centering
  \caption{\small \textbf{Summary of benchmark dataset statistics}, including number of nodes $|V|$, number of nodes with sensitive attributes  
  $|S|$, number of edges $|E|$, node-level non-sensitive features $d$, target task and adversarial task, and whether the experiment concerns with
  multi-set obfuscation.}
  \label{summary}
  \resizebox{0.5\textwidth}{!}{%
  \begin{tabular}{ l l l l l l l | l}
    \toprule
    \textsc{Dataset}      & $|V|$       & $|S|$       & $|E|$       & $d$ &  Multi? & \textsc{Metrics} & \textsc{Adversary} \\ [0.5ex] 
    \midrule
    \textsc{CiteSeer}     & $3,327$     & $3,327$     & $4,552$     & $3,703$  &\xmark& \text{AUC}           & Macro-F1  \\
    \textsc{Pubmed}       & $19,717$    & $19,717$    & $44,324$    & $500$    &\xmark& \text{AUC}           & Macro-F1  \\
    \textsc{QM9}          & $2,383,055$ & $2,383,055$ & $2,461,144$ & $13$     &\xmark& \text{MAE}           & MAE   \\
    \textsc{ML-1M}        & $9,940$     & $6,040$     & $1,000,209$ & $1$ (id) &\xmark& \text{RMSE}          & Macro-F1/AUC  \\
    \textsc{FB15k-237}    & $14,940$    & $14,940$    & $168,618$   & $1$ (id) &\cmark& \text{MRR}   & Macro-F1  \\
    \textsc{WN18RR}       & $40,943$    & $40,943$    & $173,670$   & $1$ (id) &\xmark& \text{MRR}   & Macro-F1  \\
    \bottomrule
   \end{tabular}}
   \vspace*{-2.0em}
\end{table}

\textbf{Trade-off.}~~We compare performance under a wide range of $\lambda$ on Movielens-1M. We show that GAL defends the inference attacks to a great extent while only suffering minor losses to downstream task performance. We also empirically compare the training trajectories of the TV and Wasserstein variants, and confirm that the Wasserstein variant often leads to more stable training.

\textbf{Neighborhood Attacks.}~~In this setting, an adversary also has access to the embeddings of \textit{neighbors} of a  node, e.g. the adversary can infer 
sensitive attribute $A_v$ from $X_w^{(K)}$ (instead of $X_v^{(K)}$) such that there is a path between $v$ and $w$. Since GNN's neighborhood-aggregation paradigm may introduce such information leakage, adversaries shall achieve nontrivial performance. We further generalize this to an $n$-hop scenario. %

In all experiments, the adversaries only have access to the training set labels along with embeddings from the GNN, and the performance is measured on the held-out test set. A summary of the datasets, including graph attributes, task, and adversary metrics, is in Table \ref{summary}. Detailed experimental setups may be found in Appendix~\ref{sec:setup}. Overall, our results have successfully addressed all three questions, demonstrating that our framework is attractive for node- and neighborhood-level attribute obfuscation across downstream tasks, GNN architectures, and trade-off parameter $\lambda$.

\begin{figure}[!t]
\vspace*{-1.0em}
  \centering
  \includegraphics[width=\linewidth]{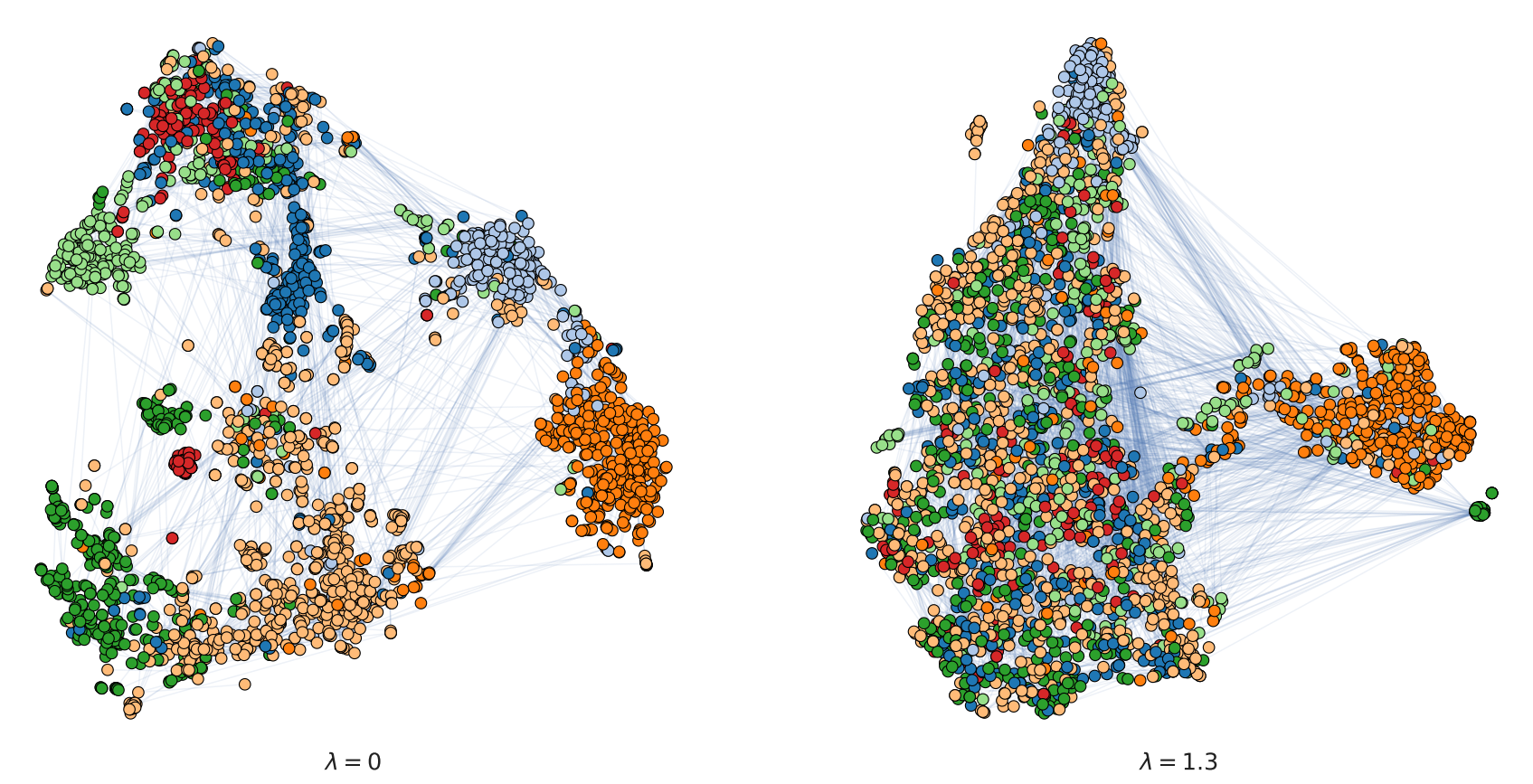}
  \caption{\small \textbf{GAL effectively protects sensitive information}. Both panels show t-SNE of feature representations under different trade-off parameters $\lambda$ (0 vs. $1.3$). 
  Node colors represent node classes.}
  \label{tsne}
\end{figure}
\begin{figure}[htb] 
\vspace*{-1.5em}
\centering
\subfloat[Adversary: Macro F-1]{%
    \includegraphics[width=0.5\linewidth]{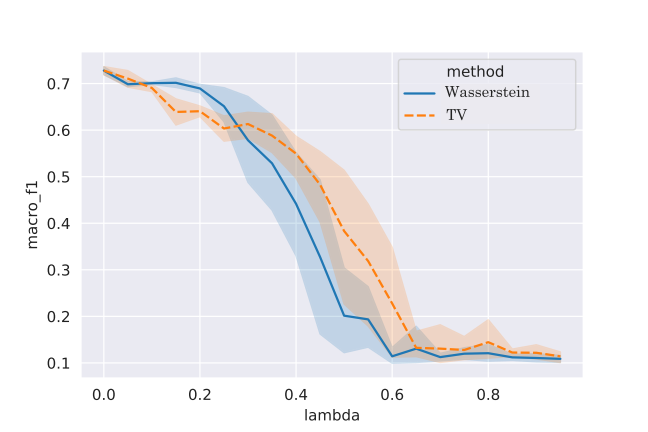}%
    \label{fig:a}%
    }%
\hfill%
\subfloat[Task: Binary cross-entropy]{%
    \includegraphics[width=0.5\linewidth]{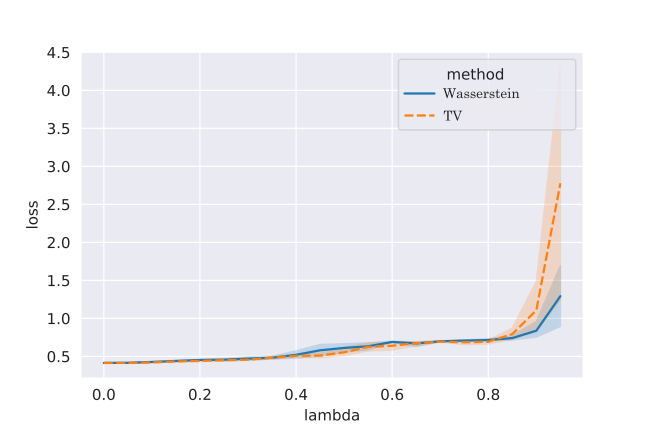}%
    \label{fig:b}%
    }%
\caption{\textbf{Performance of Wasserstein vs. TV for node-level information obfuscation} on Cora dataset under different $\lambda$, with $95\%$ confidence interval over five runs.  Wasserstein offers a more stable task performance while producing a better obfuscation compared to TV.}
\label{fig:distance}
\vspace*{-1em}
\end{figure}

\begin{figure*}[t!]
  \centering
  \includegraphics[width=0.85\linewidth]{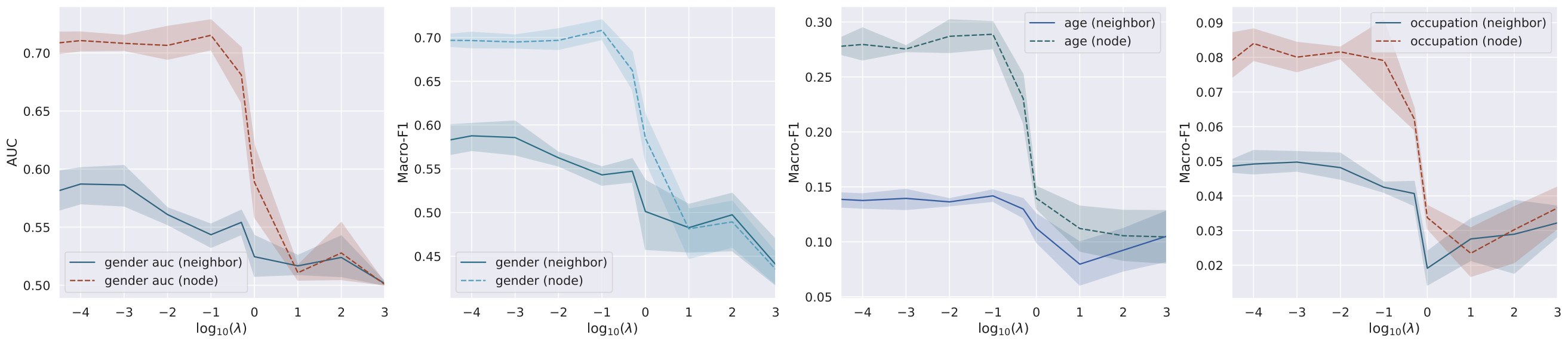}
  \includegraphics[width=0.85\linewidth]{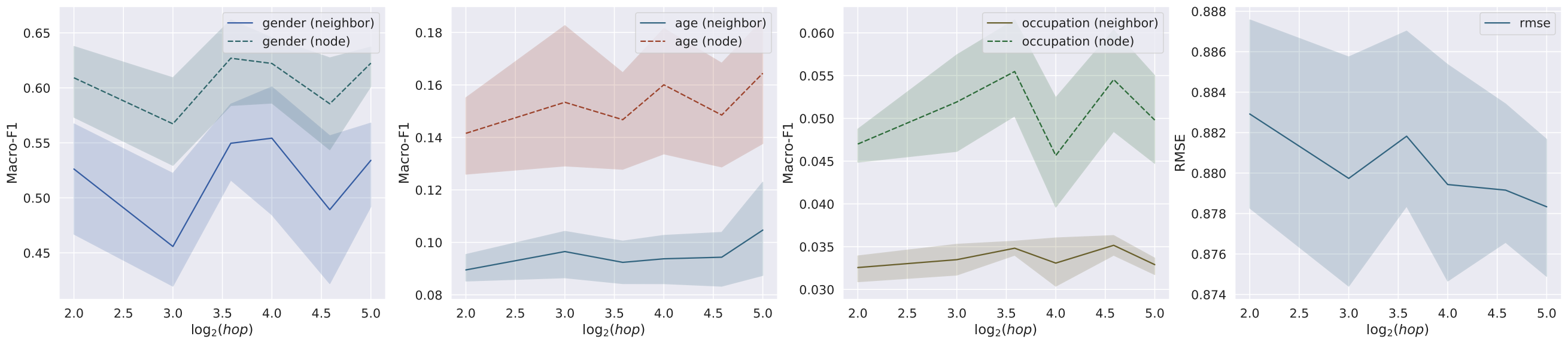}
  \caption{\small \textbf{Performance of neighborhood-level (top) and $n$-hop (bottom) attribute obfuscation} on a 3-layer/2-layer ChebNet with Movielens-1M dataset under different $\lambda$/``hop`` respectively. Band represents 95\% confidence interval over five runs. For $n$-hop experiments, $\lambda$ is fixed to be 0.8, and distance is Total Variation. }
  \label{neighbor}
  \vspace*{-1em}
\end{figure*}

\begin{figure}[t!] 
  \centering
    \includegraphics[width=\linewidth]{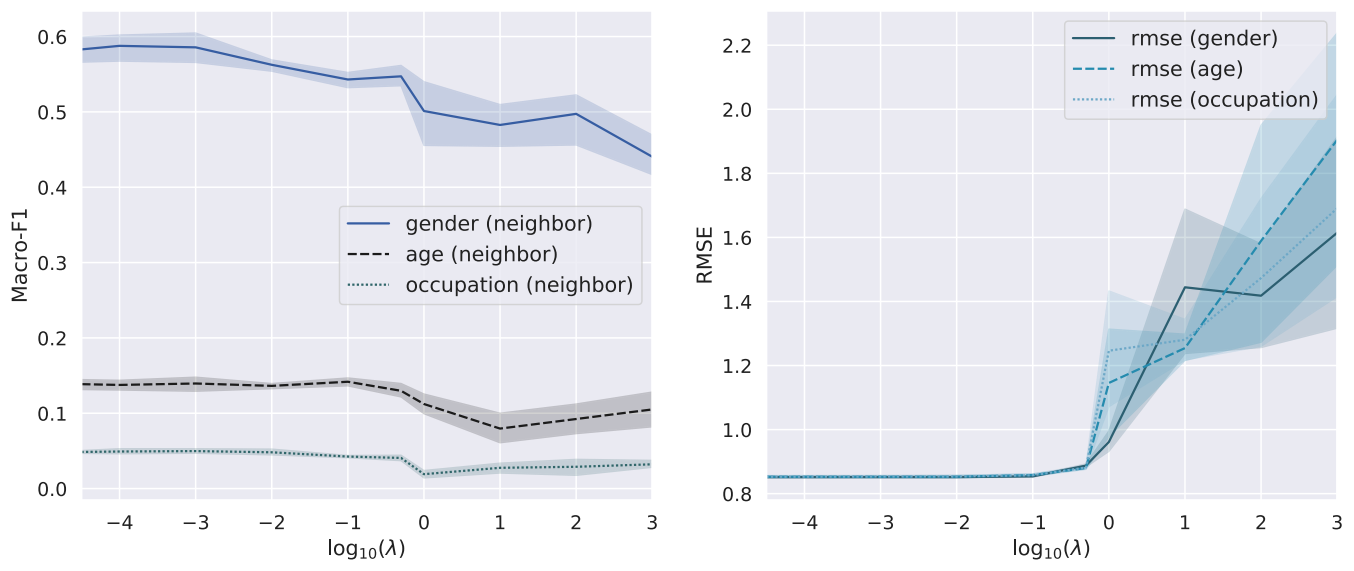}
    \caption{\textbf{Performance of node-level attribute obfuscation} on Movielens-1M dataset under different $\lambda$, with $95\%$ confidence interval over five runs.}
    \label{node}
\vspace*{-1.5em}
\end{figure} 

\subsection{Robust Node-level Information Obfuscation}
\label{section:1}
We first demonstrate that our framework is effective across a wide variety of GNN architectures, tasks and training methods. Quantitatively, we report task performances $T_{i}$ as well as adversary performances $A_{i}$, $\forall i\in[3]$ both specific to respective datasets in Table~\ref{task2} and Table~\ref{task}, where we use the subscript $i$ to denote the $i$-th experimental setting. Note that fixed-embedding \citep{compfair} results are only reported for Movielens-1M dataset because on other datasets, the experimental setups are not suitable for comparison. Across all datasets, we witness a significant drop in adversary's performance with a rather minor decrease in the downstream task performance.

Qualitatively, the effect of adversarial training is also apparent: in Figure \ref{tsne}, we visualized the t-SNE~\citep{tsne}-transformed representations of graph embeddings under different regularization strengths on the Cora dataset. We observe that under a high regularization strength, node-level attributes are better mixed in the sense that nodes belonging to different classes are harder to separate from each other on a graph-structural level. 

 \begin{table*}[]
\vspace{-0.1in}
  \small
  \centering
   \caption{\small \textbf{Performance of predictions and obfuscation on benchmark datasets}. $\lambda_i, T_i, A_i$ represents the tradeoff parameter, task performance and adversary performance in the $i$-th experiment, respectively. Best-performing GNNs and best obfuscation results are highlighted in bold. The ``---`` mark stands for ``same as above.`` }
  \label{task}
  \resizebox{\linewidth}{!}{%
  \begin{tabular}{ l l l l | c c c c c c  c c c }
    \toprule
             & GNN Encoder   & Method & Adversary & $\lambda_1$ & $\lambda_2$ & $\lambda_3$  & $T_1$ & $T_2$ &  $T_3$ & $A_1$ &  $A_2$ &  $A_3$ \\ [0.5ex] 
    \midrule 
    \textsc{Pubmed}  
   & \text{GCN}    & Total Variation & Doc. Class & 0.5 & 0.85  & 0.95 & $\mathbf{0.959}$  & $0.930$  & $0.818$  & $0.800$  & $0.798$  & $0.762$ \\
   &               & Wasserstein    & Doc. Class & ---  & ---  & ---  & $0.959$  & $0.895$  & $0.733$  & $0.798$  & $0.780$  & $0.596$ \\
   & \text{GAT}    & Total Variation & Doc. Class & ---  & ---  & ---  & $0.924$  & $0.873$  & $0.810$  & $0.782$  & $0.794$  & $0.730$ \\
   &               & Wasserstein    & Doc. Class & ---  & ---  & ---  & $0.928$  & $0.846$  & $0.785$  & $0.778$  & $0.766$  & $0.741$ \\
   & \text{ChebNet}       
                   & Total Variation & Doc. Class & ---  & ---  & ---  & $0.933$  & $0.872$  & $0.757$  & $0.798$  & $0.788$  & $0.747$ \\
   &               & Wasserstein    & Doc. Class & ---  & ---  & ---  & $0.919$  & $0.727$  & $0.739$  & $0.780$  & $0.641$  & $\mathbf{0.568}$ \\
    \bottomrule
    \midrule
    \textsc{CiteSeer}  
   & \text{GCN}            & Total Variation & Doc. Class & 0.75 &  1   & 1.5  & $0.867$  & $0.694$  & $0.500$  & $0.785$  & $0.731$  & $0.652$ \\
   &                       & Wasserstein    & Doc. Class & ---  & ---  & ---  & $0.814$  & $0.623$  & $0.504$  & $0.644$  & $0.608$  & $0.407$ \\
   & \text{GAT}            & Total Variation & Doc. Class & ---  & ---  & ---  & $\mathbf{0.896}$  & $0.553$  & $0.505$  & $0.789$  & $0.273$  & $0.168$ \\
   &                       & Wasserstein    & Doc. Class & ---  & ---  & ---  & $0.811$  & $0.519$  & $0.501$  & $0.656$  & $0.278$  & $\mathbf{0.125}$ \\
   & \text{ChebNet}       & Total Variation & Doc. Class & ---  & ---  & ---  & $0.776$  & $0.678$  & $0.500$  & $0.735$  & $0.725$  & $0.642$ \\
   &                       & Wasserstein    & Doc. Class & ---  & ---  & ---  & $0.732$  & $0.614$  & $0.500$  & $0.661$  & $0.568$  & $0.403$ \\
    \bottomrule
    \midrule
    \textsc{QM9}     & \text{MPNN}           & Total Variation & Polarizability    & 0    & 0.05 & 0.5 & $\mathbf{0.121}$          & $0.132$          & $0.641$          & $1.054$          & $1.359$          & $\mathbf{3.100}$ \\
    \bottomrule
    \midrule
    \textsc{WN18RR}  & \text{CompGCN}         & Total Variation & Word Sense        & 0    & 1.0  & 1.5 & $\mathbf{0.462}$   & $0.437$    & $0.403$    & $0.208$     & $\mathbf{0.131}$     & $0.187$ \\
                     &                        & Total Variation & POS tag           & ---  & ---  & --- & ---   & ${0.430}$        & $0.395$    & $0.822$     & $\mathbf{0.607}$     & $0.705$ \\
    \bottomrule
    \midrule
    \textsc{FB15k-237}  & \text{CompGCN}       & Total Variation & Ent. Attr.   & 0    & 1.0  & 1.5 & $\mathbf{0.351}$  & $0.320$          & $0.319$          & $0.682$   & $0.641$   & $\mathbf{0.630}$ \\
    \bottomrule
    \end{tabular}}
  \end{table*}

\subsection{Fine-grained Trade-off Tuning with $\lambda$}
\label{section:2}
\vspace*{-0.5em}
We show that the hyper-parameter $\lambda$ is indeed a good trade-off hyper-parameter that well balances downstream task performance and obfuscation effectiveness.
We perform experiments with ChebNet and the Movielens-1M dataset, and test $\lambda$ values ranging from $0$ to $10^3$. Our results in Figure \ref{node} show that
as $\lambda$ increases, it is indeed more difficult for the adversary to extract sensitive information from GNN embeddings; the downstream task performance decreases steadily but with a small rate. When
$\lambda$ is larger than $10$, training destabilizes further due to inherent difficulties in optimization, which is demonstrated by the higher variance in task performance.
However, this trade-off is less visible for larger graphs like FB15k-237 and WN18RR, where the effects of lambda tends to be less
monotonic. The detailed results are in Table \ref{task} and Figure \ref{fig:distance}.

\subsection{Neighborhood Attack and $n$-Hop Genrealization}
\label{section:3}

Finally, we consider the scenario where the adversary seeks to extract node-level information from embeddings of \textit{neighbors}.
 This is, in particular, a serious concern in GNNs where the message passing layers incorporate neighborhood-level information and hence the information leakage. Our results in Figure \ref{neighbor} verify that, without any form of obfuscation ($\lambda=0$), the adversary indeed achieves nontrivial performance by accessing the embeddings of neighbors on the Movielens-1M dataset with a simple 2-layer ChebNet. We further run comprehensive experiments where the obfuscation targets neighborhood-level embeddings, and we observe the degradation of the adversary's performance (both on the node- and neighbor-level embeddings) as $\lambda$ increases. 
The results demonstrate that under low values of $\lambda$, the neighborhood-level obfuscation has more effects on neighborhood adversaries than node adversaries, 
suggesting that node-level embeddings are less protected. However, as $\lambda$ continues to increase, the degradation in the performance of node adversaries is more visible. An extension of this setup is to consider neighborhood-level attacks as ``single-hop'', where we can naturally define ``$n$-hop'' as adversaries having accesses to only embeddings of nodes that are $n$-distant away from the target node.

Since finding $n$-distant neighbors for arbitrary nodes in a large graph on-the-fly is computationally inefficient during training (in the sense that the complexity bound involves
$|E|$ and $|V|$), we propose a Monte-Carlo algorithm that probabilistically finds such neighbor in $O(n^2)$ time, the details of which can be found in appendix. 
We report results under different ``hops`` with the same encoder-$\lambda$ setting on Movielens-1M, shown in Figure \ref{neighbor}. 
In general, we observe that as ``hop`` increases, the retrieved embedding contains less information about the target node. 
Therefore, adversarial training's effect will have less effect on the degradation of the target task, which is demonstrated by the steady decrease of RMSE. 
The fluctuations in the trend of node- and neighborhood-level adversaries are due to the probabilistic nature of the Monte-Carlo algorithm used to sample neighbors, 
where it may end up finding a much closer neighbor than intended, destabilizing the training process.
This is another trade-off due to limited training time, yet the general trend is still visible, certifying our assumptions.

\vspace*{-0.5em}

\section{Other Related Work}
\textbf{Adversarial Attack on Graphs.} Our problem formulation, i.e., information obfuscation, is \textit{significantly different} from  adversarial attacks on graphs~\citep{Bojchevski2019AdversarialAO,Ma2019AttackingGC,Xu2019TopologyAA,Dai2018AdversarialAO,Chang2019TheGB}. Works on adversarial attacks focus on perturbations of a graph, e.g., by adding or removing edges, that maximize the loss of ``victim'' nodes for GNNs. In contrast, adversaries in our framework do not alter graph structure; instead, they seek to extract critical information from the GNN node embeddings.

\textbf{Differential Privacy.} A related but \textit{different} notion of privacy is differential privacy~\citep{dwork2014algorithmic}, which aims at defending the so-called membership inference attack~\citep{shokri2017membership,nasr2018machine}. Their goal is to design randomized algorithms so that an adversary cannot guess whether an individual appears in the training data by looking at the output of the algorithm. We instead tackle a information obfuscation problem where the adversary's goal is to infer some attribute of a node in a graph, and our goal is to find embedding of the data, so that the adversary cannot accurately infer a sensitive attribute from the embedding.

\textbf{Information Bottleneck.}
Our work is also related to but \textit{different from} the information bottleneck method, which seeks to simultaneously compress data while preserving information for target tasks~\citep{tishby2000information,alemi2016deep,tishby2015deep}.  The information bottleneck framework could be understood as maximizing the following objective: $I(Y; Z) - \beta I(X; Z)$. Specifically, there is no sensitive attribute $A$  in the formulation. In contrast, the minimax framework in this work is about a trade-off problem, and the original input $X$ does not appear in our formulation: $I(Y; Z) - \beta I(A; Z)$. Overall, despite their surface similarity, our adversarial method is significantly different from that of the information bottleneck method in terms of both formulation and goals. 

\section{Conclusion}
In this work, we formulate and address the problem of information obfuscation on graphs with GNNs.
Our framework, termed GAL, introduces a minimax game between the desired GNN encoder and the worst-case adversary.
GAL creates a strong defense against information leakage in terms of a provable lower bound, while only suffering a marginal loss in task performance. We also show an information-theoretic bound for the inherent trade-off between accuracy and obfuscation. Experiments show that GAL  perfectly complements existing algorithms deployed in downstream tasks to address information security concerns on graphs.

\section*{Acknowledgements}
We thank the anonymous reviewers for their insightful feedback and suggestions. RS was partly supported by the NSF IIS1763562 and ONR Grant N000141812861. KX and SJ were supported by NSF CAREER award 1553284 and NSF III 1900933. HZ thanks the DARPA XAI project, contract \#FA87501720152, for support. GG thanks Microsoft Research for support.

\newpage
\bibliography{main}

\begin{thebibliography}{55}
\providecommand{\natexlab}[1]{#1}
\providecommand{\url}[1]{\texttt{#1}}
\expandafter\ifx\csname urlstyle\endcsname\relax
  \providecommand{\doi}[1]{doi: #1}\else
  \providecommand{\doi}{doi: \begingroup \urlstyle{rm}\Url}\fi

\bibitem[Alemi et~al.(2016)Alemi, Fischer, Dillon, and Murphy]{alemi2016deep}
Alemi, A.~A., Fischer, I., Dillon, J.~V., and Murphy, K.
\newblock Deep variational information bottleneck.
\newblock \emph{arXiv preprint arXiv:1612.00410}, 2016.

\bibitem[Arjovsky et~al.(2017)Arjovsky, Chintala, and
  Bottou]{arjovsky2017wasserstein}
Arjovsky, M., Chintala, S., and Bottou, L.
\newblock Wasserstein generative adversarial networks.
\newblock In \emph{International conference on machine learning}, pp.\
  214--223. PMLR, 2017.

\bibitem[Berg et~al.(2017)Berg, Kipf, and Welling]{gcmc}
Berg, R. v.~d., Kipf, T.~N., and Welling, M.
\newblock Graph convolutional matrix completion.
\newblock \emph{arXiv preprint arXiv:1706.02263}, 2017.

\bibitem[Bojchevski \& G{\"u}nnemann(2019)Bojchevski and
  G{\"u}nnemann]{Bojchevski2019AdversarialAO}
Bojchevski, A. and G{\"u}nnemann, S.
\newblock Adversarial attacks on node embeddings via graph poisoning.
\newblock In \emph{ICML}, 2019.

\bibitem[Bordes et~al.(2013)Bordes, Glorot, Weston, and Bengio]{wn18rr-mlj13}
Bordes, A., Glorot, X., Weston, J., and Bengio, Y.
\newblock A semantic matching energy function for learning with
  multi-relational data.
\newblock \emph{Machine Learning}, 2013.
\newblock to appear.

\bibitem[Bose \& Hamilton(2019{\natexlab{a}})Bose and
  Hamilton]{bose2019compositional}
Bose, A. and Hamilton, W.
\newblock Compositional fairness constraints for graph embeddings.
\newblock In \emph{International Conference on Machine Learning}, pp.\
  715--724, 2019{\natexlab{a}}.

\bibitem[Bose \& Hamilton(2019{\natexlab{b}})Bose and Hamilton]{compfair}
Bose, A. and Hamilton, W.
\newblock Compositional fairness constraints for graph embeddings.
\newblock In \emph{International Conference on Machine Learning}, pp.\
  715--724, 2019{\natexlab{b}}.

\bibitem[Cai et~al.(2020)Cai, Luo, Xu, He, Liu, and Wang]{cai2020graphnorm}
Cai, T., Luo, S., Xu, K., He, D., Liu, T.-y., and Wang, L.
\newblock Graphnorm: A principled approach to accelerating graph neural network
  training.
\newblock \emph{arXiv preprint arXiv:2009.03294}, 2020.

\bibitem[Calabro(2009)]{calabro2009exponential}
Calabro, C.
\newblock \emph{The exponential complexity of satisfiability problems}.
\newblock PhD thesis, UC San Diego, 2009.

\bibitem[Chang et~al.(2019)Chang, Rong, Xu, Huang, Zhang, Cui, Zhu, and
  Huang]{Chang2019TheGB}
Chang, H., Rong, Y., Xu, T., Huang, W., Zhang, H., Cui, P., Zhu, W., and Huang,
  J.
\newblock The general black-box attack method for graph neural networks.
\newblock \emph{ArXiv}, abs/1908.01297, 2019.

\bibitem[Cho et~al.(2014)Cho, Van~Merri{\"e}nboer, Gulcehre, Bahdanau,
  Bougares, Schwenk, and Bengio]{gru}
Cho, K., Van~Merri{\"e}nboer, B., Gulcehre, C., Bahdanau, D., Bougares, F.,
  Schwenk, H., and Bengio, Y.
\newblock Learning phrase representations using rnn encoder-decoder for
  statistical machine translation.
\newblock \emph{arXiv preprint arXiv:1406.1078}, 2014.

\bibitem[Dai et~al.(2018)Dai, Li, Tian, Huang, Wang, Zhu, and
  Song]{Dai2018AdversarialAO}
Dai, H., Li, H., Tian, T., Huang, X., Wang, L., Zhu, J., and Song, L.
\newblock Adversarial attack on graph structured data.
\newblock In \emph{ICML}, 2018.

\bibitem[Daskalakis \& Panageas(2018)Daskalakis and
  Panageas]{daskalakis2018limit}
Daskalakis, C. and Panageas, I.
\newblock The limit points of (optimistic) gradient descent in min-max
  optimization.
\newblock In \emph{Advances in Neural Information Processing Systems}, pp.\
  9236--9246, 2018.

\bibitem[Defferrard et~al.(2016)Defferrard, Bresson, and
  Vandergheynst]{defferrard2016convolutional}
Defferrard, M., Bresson, X., and Vandergheynst, P.
\newblock Convolutional neural networks on graphs with fast localized spectral
  filtering.
\newblock pp.\  3844--3852, 2016.

\bibitem[Du et~al.(2019)Du, Hou, Salakhutdinov, Poczos, Wang, and
  Xu]{du2019graph}
Du, S.~S., Hou, K., Salakhutdinov, R.~R., Poczos, B., Wang, R., and Xu, K.
\newblock Graph neural tangent kernel: Fusing graph neural networks with graph
  kernels.
\newblock In \emph{Advances in Neural Information Processing Systems}, pp.\
  5724--5734, 2019.

\bibitem[Duvenaud et~al.(2015)Duvenaud, Maclaurin, Iparraguirre, Bombarell,
  Hirzel, Aspuru-Guzik, and Adams]{duvenaud2015convolutional}
Duvenaud, D.~K., Maclaurin, D., Iparraguirre, J., Bombarell, R., Hirzel, T.,
  Aspuru-Guzik, A., and Adams, R.~P.
\newblock Convolutional networks on graphs for learning molecular fingerprints.
\newblock pp.\  2224--2232, 2015.

\bibitem[Dwork et~al.(2014)Dwork, Roth, et~al.]{dwork2014algorithmic}
Dwork, C., Roth, A., et~al.
\newblock The algorithmic foundations of differential privacy.
\newblock \emph{Foundations and Trends in Theoretical Computer Science},
  9\penalty0 (3-4):\penalty0 211--407, 2014.

\bibitem[Fey(2019)]{fey2019just}
Fey, M.
\newblock Just jump: Dynamic neighborhood aggregation in graph neural networks.
\newblock \emph{arXiv preprint arXiv:1904.04849}, 2019.

\bibitem[Fey \& Lenssen(2019)Fey and Lenssen]{torch_geometric}
Fey, M. and Lenssen, J.~E.
\newblock Fast graph representation learning with {PyTorch Geometric}.
\newblock In \emph{ICLR Workshop on Representation Learning on Graphs and
  Manifolds}, 2019.

\bibitem[Ganin et~al.(2016)Ganin, Ustinova, Ajakan, Germain, Larochelle,
  Laviolette, Marchand, and Lempitsky]{ganin2016domain}
Ganin, Y., Ustinova, E., Ajakan, H., Germain, P., Larochelle, H., Laviolette,
  F., Marchand, M., and Lempitsky, V.
\newblock Domain-adversarial training of neural networks.
\newblock \emph{The Journal of Machine Learning Research}, 17\penalty0
  (1):\penalty0 2096--2030, 2016.

\bibitem[Gibbs \& Su(2002)Gibbs and Su]{gibbs2002choosing}
Gibbs, A.~L. and Su, F.~E.
\newblock On choosing and bounding probability metrics.
\newblock \emph{International statistical review}, 70\penalty0 (3):\penalty0
  419--435, 2002.

\bibitem[Gilmer et~al.(2017)Gilmer, Schoenholz, Riley, Vinyals, and
  Dahl]{gilmer2017neural}
Gilmer, J., Schoenholz, S.~S., Riley, P.~F., Vinyals, O., and Dahl, G.~E.
\newblock Neural message passing for quantum chemistry.
\newblock In \emph{International Conference on Machine Learning}, pp.\
  1273--1272, 2017.

\bibitem[Goh \& Sim(2010)Goh and Sim]{goh2010distributionally}
Goh, J. and Sim, M.
\newblock Distributionally robust optimization and its tractable
  approximations.
\newblock \emph{Operations research}, 58\penalty0 (4-part-1):\penalty0
  902--917, 2010.

\bibitem[Goldwasser \& Bellare(1996)Goldwasser and
  Bellare]{goldwasser1996lecture}
Goldwasser, S. and Bellare, M.
\newblock Lecture notes on cryptography.
\newblock \emph{Summer course “Cryptography and computer security” at MIT},
  1999:\penalty0 1999, 1996.

\bibitem[Goodfellow et~al.(2014)Goodfellow, Pouget-Abadie, Mirza, Xu,
  Warde-Farley, Ozair, Courville, and Bengio]{goodfellow2014generative}
Goodfellow, I., Pouget-Abadie, J., Mirza, M., Xu, B., Warde-Farley, D., Ozair,
  S., Courville, A., and Bengio, Y.
\newblock Generative adversarial nets.
\newblock In \emph{Advances in neural information processing systems}, pp.\
  2672--2680, 2014.

\bibitem[Hamilton et~al.(2017)Hamilton, Ying, and
  Leskovec]{hamilton2017inductive}
Hamilton, W.~L., Ying, R., and Leskovec, J.
\newblock Inductive representation learning on large graphs.
\newblock pp.\  1025--1035, 2017.

\bibitem[Harper \& Konstan(2015)Harper and Konstan]{ml1m}
Harper, F.~M. and Konstan, J.~A.
\newblock The movielens datasets: History and context.
\newblock 2015.

\bibitem[Kipf \& Welling(2017)Kipf and Welling]{kipf2016semi}
Kipf, T.~N. and Welling, M.
\newblock Semi-supervised classification with graph convolutional networks.
\newblock In \emph{International Conference on Learning Representations}, 2017.

\bibitem[Landrum()]{rdkit}
Landrum, G.
\newblock Rdkit: Open-source cheminformatics.
\newblock URL \url{http://www.rdkit.org}.

\bibitem[Ma et~al.(2019)Ma, Wang, Wu, and Tang]{Ma2019AttackingGC}
Ma, Y., Wang, S., Wu, L., and Tang, J.
\newblock Attacking graph convolutional networks via rewiring.
\newblock \emph{ArXiv}, abs/1906.03750, 2019.

\bibitem[Maas(2013)]{leakyrelu}
Maas, A.~L.
\newblock Rectifier nonlinearities improve neural network acoustic models.
\newblock 2013.

\bibitem[Madry et~al.(2017)Madry, Makelov, Schmidt, Tsipras, and
  Vladu]{madry2017towards}
Madry, A., Makelov, A., Schmidt, L., Tsipras, D., and Vladu, A.
\newblock Towards deep learning models resistant to adversarial attacks.
\newblock \emph{arXiv preprint arXiv:1706.06083}, 2017.

\bibitem[Moon et~al.(2017)Moon, Jones, and Samatova]{moon}
Moon, C., Jones, P., and Samatova, N.~F.
\newblock Learning entity type embeddings for knowledge graph completion.
\newblock In \emph{Proceedings of the 2017 ACM on conference on information and
  knowledge management}, pp.\  2215--2218, 2017.

\bibitem[Nasr et~al.(2018)Nasr, Shokri, and Houmansadr]{nasr2018machine}
Nasr, M., Shokri, R., and Houmansadr, A.
\newblock Machine learning with membership privacy using adversarial
  regularization.
\newblock In \emph{Proceedings of the 2018 ACM SIGSAC Conference on Computer
  and Communications Security}, pp.\  634--646, 2018.

\bibitem[Nouiehed et~al.(2019)Nouiehed, Sanjabi, Huang, Lee, and
  Razaviyayn]{nouiehed2019solving}
Nouiehed, M., Sanjabi, M., Huang, T., Lee, J.~D., and Razaviyayn, M.
\newblock Solving a class of non-convex min-max games using iterative first
  order methods.
\newblock In \emph{Advances in Neural Information Processing Systems}, pp.\
  14905--14916, 2019.

\bibitem[Paszke et~al.(2019)Paszke, Gross, Massa, Lerer, Bradbury, Chanan,
  Killeen, Lin, Gimelshein, Antiga, et~al.]{pytorch}
Paszke, A., Gross, S., Massa, F., Lerer, A., Bradbury, J., Chanan, G., Killeen,
  T., Lin, Z., Gimelshein, N., Antiga, L., et~al.
\newblock Pytorch: An imperative style, high-performance deep learning library.
\newblock In \emph{Advances in neural information processing systems}, pp.\
  8026--8037, 2019.

\bibitem[Ribeiro et~al.(2015)Ribeiro, Grolinger, and Capretz]{ribeiro2015mlaas}
Ribeiro, M., Grolinger, K., and Capretz, M.~A.
\newblock Mlaas: Machine learning as a service.
\newblock In \emph{2015 IEEE 14th International Conference on Machine Learning
  and Applications (ICMLA)}, pp.\  896--902. IEEE, 2015.

\bibitem[Scarselli et~al.(2009)Scarselli, Gori, Tsoi, Hagenbuchner, and
  Monfardini]{scarselli2009graph}
Scarselli, F., Gori, M., Tsoi, A.~C., Hagenbuchner, M., and Monfardini, G.
\newblock The graph neural network model.
\newblock \emph{IEEE Transactions on Neural Networks}, 20\penalty0
  (1):\penalty0 61--80, 2009.

\bibitem[Shokri et~al.(2017)Shokri, Stronati, Song, and
  Shmatikov]{shokri2017membership}
Shokri, R., Stronati, M., Song, C., and Shmatikov, V.
\newblock Membership inference attacks against machine learning models.
\newblock In \emph{2017 IEEE Symposium on Security and Privacy (SP)}, pp.\
  3--18. IEEE, 2017.

\bibitem[Tishby \& Zaslavsky(2015)Tishby and Zaslavsky]{tishby2015deep}
Tishby, N. and Zaslavsky, N.
\newblock Deep learning and the information bottleneck principle.
\newblock In \emph{2015 IEEE Information Theory Workshop (ITW)}, pp.\  1--5.
  IEEE, 2015.

\bibitem[Tishby et~al.(2000)Tishby, Pereira, and Bialek]{tishby2000information}
Tishby, N., Pereira, F.~C., and Bialek, W.
\newblock The information bottleneck method.
\newblock \emph{arXiv preprint physics/0004057}, 2000.

\bibitem[van~der Maaten \& Hinton(2008)van~der Maaten and Hinton]{tsne}
van~der Maaten, L. and Hinton, G.
\newblock Visualizing data using t-sne, 2008.

\bibitem[Vashishth et~al.(2019)Vashishth, Sanyal, Nitin, and Talukdar]{compgcn}
Vashishth, S., Sanyal, S., Nitin, V., and Talukdar, P.
\newblock Composition-based multi-relational graph convolutional networks.
\newblock \emph{arXiv preprint arXiv:1911.03082}, 2019.

\bibitem[Velickovic et~al.(2018)Velickovic, Cucurull, Casanova, Romero, Lio,
  and Bengio]{velivckovic2017graph}
Velickovic, P., Cucurull, G., Casanova, A., Romero, A., Lio, P., and Bengio, Y.
\newblock Graph attention networks.
\newblock 2018.

\bibitem[Vinyals et~al.(2015)Vinyals, Bengio, and Kudlur]{set2set}
Vinyals, O., Bengio, S., and Kudlur, M.
\newblock Order matters: Sequence to sequence for sets.
\newblock \emph{arXiv preprint arXiv:1511.06391}, 2015.

\bibitem[Wu et~al.(2017)Wu, Ramsundar, Feinberg, Gomes, Geniesse, Pappu,
  Leswing, and Pande]{qm9}
Wu, Z., Ramsundar, B., Feinberg, E.~N., Gomes, J., Geniesse, C., Pappu, A.~S.,
  Leswing, K., and Pande, V.~S.
\newblock Moleculenet: {A} benchmark for molecular machine learning.
\newblock 2017.

\bibitem[Xu et~al.(2018)Xu, Li, Tian, Sonobe, Kawarabayashi, and
  Jegelka]{xu2018representation}
Xu, K., Li, C., Tian, Y., Sonobe, T., Kawarabayashi, K.-i., and Jegelka, S.
\newblock Representation learning on graphs with jumping knowledge networks.
\newblock In \emph{International Conference on Machine Learning}, pp.\
  5453--5462, 2018.

\bibitem[Xu et~al.(2019{\natexlab{a}})Xu, Chen, Liu, Chen, Weng, Hong, and
  Lin]{Xu2019TopologyAA}
Xu, K., Chen, H., Liu, S., Chen, P.-Y., Weng, T.-W., Hong, M., and Lin, X.
\newblock Topology attack and defense for graph neural networks: An
  optimization perspective.
\newblock In \emph{IJCAI}, 2019{\natexlab{a}}.

\bibitem[Xu et~al.(2019{\natexlab{b}})Xu, Hu, Leskovec, and Jegelka]{xu2018how}
Xu, K., Hu, W., Leskovec, J., and Jegelka, S.
\newblock How powerful are graph neural networks?
\newblock In \emph{International Conference on Learning Representations},
  2019{\natexlab{b}}.

\bibitem[Xu et~al.(2020)Xu, Li, Zhang, Du, ichi Kawarabayashi, and
  Jegelka]{Xu2020What}
Xu, K., Li, J., Zhang, M., Du, S.~S., ichi Kawarabayashi, K., and Jegelka, S.
\newblock What can neural networks reason about?
\newblock In \emph{International Conference on Learning Representations}, 2020.
\newblock URL \url{https://openreview.net/forum?id=rJxbJeHFPS}.

\bibitem[Xu et~al.(2021)Xu, Zhang, Li, Du, Kawarabayashi, and
  Jegelka]{xu2021how}
Xu, K., Zhang, M., Li, J., Du, S.~S., Kawarabayashi, K.-I., and Jegelka, S.
\newblock How neural networks extrapolate: From feedforward to graph neural
  networks.
\newblock In \emph{International Conference on Learning Representations}, 2021.
\newblock URL \url{https://openreview.net/forum?id=UH-cmocLJC}.

\bibitem[Yang et~al.(2016)Yang, Cohen, and Salakhutdinov]{planetoid}
Yang, Z., Cohen, W.~W., and Salakhutdinov, R.
\newblock Revisiting semi-supervised learning with graph embeddings.
\newblock \emph{arXiv preprint arXiv:1603.08861}, 2016.

\bibitem[Ying et~al.(2018)Ying, He, Chen, Eksombatchai, Hamilton, and
  Leskovec]{ying2018graph}
Ying, R., He, R., Chen, K., Eksombatchai, P., Hamilton, W.~L., and Leskovec, J.
\newblock Graph convolutional neural networks for web-scale recommender
  systems.
\newblock In \emph{Proceedings of the 24th ACM SIGKDD International Conference
  on Knowledge Discovery and Data Mining}, pp.\  974--983, 2018.

\bibitem[Zhao et~al.(2018)Zhao, Zhang, Wu, Moura, Costeira, and
  Gordon]{zhao2018adversarial}
Zhao, H., Zhang, S., Wu, G., Moura, J.~M., Costeira, J.~P., and Gordon, G.~J.
\newblock Adversarial multiple source domain adaptation.
\newblock In \emph{Advances in neural information processing systems}, pp.\
  8559--8570, 2018.

\bibitem[Z{\"u}gner \& G{\"u}nnemann(2019)Z{\"u}gner and
  G{\"u}nnemann]{zugner2019certifiable}
Z{\"u}gner, D. and G{\"u}nnemann, S.
\newblock Certifiable robustness and robust training for graph convolutional
  networks.
\newblock In \emph{Proceedings of the 25th ACM SIGKDD International Conference
  on Knowledge Discovery and Data Mining}, pp.\  246--256, 2019.

\end{thebibliography}
\bibliographystyle{icml}

\newpage
\appendix

\onecolumn 

\section{Proofs}
In this section we provide the detailed proofs of both theorems in the main text. We first rigorously show the relationship between the adversarial advantage and the inference error made by a worst-case adversarial:
\begin{claim}
  $1 - \Adv_\dist(\FF_A) =\inf_{f\in\FF_A}\left(\Pr_\dist(f(Z) = 1\mid A = 0) + \Pr_\dist(f(Z) = 0\mid A = 1)\right)$.
\end{claim}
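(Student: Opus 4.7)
The plan is to unfold the definition of $\Adv_\dist(\FF_A)$, use the symmetry assumption on $\FF_A$ to drop the absolute value, and then convert the supremum into an infimum by the standard identity $\sup_f(-\phi(f)) = -\inf_f \phi(f)$.

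First, I would recall that by definition $\dist_a$ is the conditional distribution given $A = a$, so that $\Pr_{\dist_a}(f(Z)=1) = \Pr_\dist(f(Z)=1 \mid A=a)$ for $a \in \{0,1\}$. The immediate goal then becomes showing
\begin{equation*}
\sup_{f\in\FF_A}\bigl|\Pr_{\dist_1}(f(Z)=1) - \Pr_{\dist_0}(f(Z)=1)\bigr| = \sup_{f\in\FF_A}\bigl(\Pr_{\dist_1}(f(Z)=1) - \Pr_{\dist_0}(f(Z)=1)\bigr).
\end{equation*}
This is where the symmetry hypothesis comes in: given any $f \in \FF_A$, the complement $1-f$ also lies in $\FF_A$, and swaps the sign of the quantity inside the absolute value, so the supremum of the signed quantity coincides with the supremum of its absolute value. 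I expect this to be the one nontrivial step, though it is a one-line argument once the symmetry is invoked.

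Next, I would use $\Pr_{\dist_1}(f(Z)=1) = 1 - \Pr_{\dist_1}(f(Z)=0)$ to rewrite
\begin{equation*}
\Pr_{\dist_1}(f(Z)=1) - \Pr_{\dist_0}(f(Z)=1) = 1 - \bigl(\Pr_{\dist_0}(f(Z)=1) + \Pr_{\dist_1}(f(Z)=0)\bigr),
\end{equation*}
so that taking $\sup_{f\in\FF_A}$ on both sides and using $\sup_f(1-\psi(f)) = 1 - \inf_f \psi(f)$ yields
\begin{equation*}
\Adv_\dist(\FF_A) = 1 - \inf_{f\in\FF_A}\bigl(\Pr_{\dist_0}(f(Z)=1) + \Pr_{\dist_1}(f(Z)=0)\bigr).
\end{equation*}
Rearranging gives exactly the claim. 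Overall the argument is bookkeeping: the only conceptual content is the use of symmetry of $\FF_A$ to eliminate the absolute value, and after that the manipulation is a direct rewriting of probabilities of complementary events.
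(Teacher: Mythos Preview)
Your proposal is correct and follows essentially the same route as the paper: both arguments invoke the symmetry of $\FF_A$ to replace the absolute value by the signed difference, then use the complementary-probability identity together with $\sup_f(1-\psi(f)) = 1 - \inf_f \psi(f)$ to reach the infimum form. The only cosmetic difference is that the paper first rewrites the advantage in terms of $\Pr(f(Z)=0\mid A=a)$ before dropping the absolute value, whereas you stay with $\Pr(f(Z)=1\mid A=a)$; the logic is identical.
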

\begin{proof}
Recall that $\FF_A$ is symmetric, hence $\forall f\in\FF_A$, $1 - f\in\FF_A$ as well:
\begin{align*}
  1 - \Adv_\dist(\FF_A) &= 1 - \sup_{f\in\FF_A}\left|\Pr_\dist(f(Z) = 0\mid A = 0) - \Pr_\dist(f(Z) = 0\mid A = 1)\right| \\
                        &= 1 - \sup_{f\in\FF_A}\left(\Pr_\dist(f(Z) = 0\mid A = 0) - \Pr_\dist(f(Z) = 0\mid A = 1)\right) \\
                        &= \inf_{f\in\FF_A}\left(\Pr_\dist(f(Z) = 1\mid A = 0) + \Pr_\dist(f(Z) = 0\mid A = 1)\right),
\end{align*}
where the second equality above is because 
\begin{equation*}
  \sup_{f\in\FF_A}\left(\Pr_\dist(f(Z) = 0\mid A = 0) - \Pr_\dist(f(Z) = 0\mid A = 1)\right)  
\end{equation*}
is always non-negative due to the symmetric assumption of $\FF_A$.
\end{proof}

Before we prove the lower bound in Theorem~\ref{thm:lowerbound}, we first need to introduce the following lemma, which is known as the data-processing inequality of the TV distance. 
\begin{lemma}[Data-processing of the TV distance]
\label{lemma:dpi}
Let $\dist$ and $\dist'$ be two distributions over the same sample space and $g$ be a Markov kernel of the same space, then $\dtv(g_\sharp\dist, g_\sharp\dist')\leq \dtv(\dist, \dist')$, where $g_\sharp\dist (g_\sharp\dist')$  is the pushforward of $\dist (\dist')$. 
\end{lemma}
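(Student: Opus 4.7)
My plan is to prove the data-processing inequality by a short unfolding of definitions, handling the deterministic case first and then the general Markov-kernel case via the dual characterization of the TV distance. The core intuition is that pushing forward effectively restricts the family of events over which one is taking the supremum in the definition of $\dtv$, so the TV distance can only decrease.

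Concretely, I would begin by writing
\begin{equation*}
\dtv(g_\sharp\dist, g_\sharp\dist') = \sup_{E'\subseteq\yyspace} \bigl| g_\sharp\dist(E') - g_\sharp\dist'(E') \bigr|,
\end{equation*}
with the supremum taken over measurable $E'$. In the deterministic case, the definition $g_\sharp\mu(E') = \mu(g^{-1}(E'))$ reduces the inside of the supremum to $|\dist(g^{-1}(E')) - \dist'(g^{-1}(E'))|$. Since $\{g^{-1}(E') : E'\subseteq \yyspace \text{ measurable}\}$ is a subcollection of the measurable sets of the source space, taking the supremum over this subcollection is bounded above by the unrestricted supremum, which is precisely $\dtv(\dist, \dist')$. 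For the genuine Markov-kernel case, I would appeal instead to the equivalent variational characterization $\dtv(\mu,\nu) = \sup_{0\le f\le 1} \bigl| \int f\, d\mu - \int f\, d\nu \bigr|$ over measurable test functions. Fixing any event $E'$ and setting $f(x)\defeq g(x, E')$ yields a $[0,1]$-valued measurable function, so $| g_\sharp\dist(E') - g_\sharp\dist'(E') | = \bigl| \int f\, d\dist - \int f\, d\dist' \bigr| \le \dtv(\dist, \dist')$, uniformly in $E'$, and the claim follows by taking the supremum.

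The only real obstacle is measure-theoretic bookkeeping: confirming that $g^{-1}(E')$ is measurable in the deterministic case and that $x \mapsto g(x, E')$ is measurable and $[0,1]$-valued in the kernel case, both of which are built into the very definitions of ``measurable map'' and ``Markov kernel.'' There is no deeper technical hurdle since the lemma is classical; its value to the paper is as a pipe fitting in the proof of Theorem~\ref{thm:lowerbound}, where it will allow one to pass between TV distances on embeddings and TV distances on downstream predictions before converting to Wasserstein and advantage bounds.
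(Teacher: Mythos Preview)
Your proof is correct and follows the standard route: the deterministic case via preimages and the general Markov-kernel case via the $[0,1]$-valued test-function characterization of $\dtv$. There is nothing to compare against, however, because the paper states Lemma~\ref{lemma:dpi} without proof, treating it as a classical fact. One small inaccuracy in your commentary: the lemma is not actually invoked in the paper's proof of Theorem~\ref{thm:lowerbound}, which instead routes through Lemma~\ref{lemma:w} (Wasserstein contraction) and the bound $W_1 \leq 2R\cdot\dtv$ on a bounded space; Lemma~\ref{lemma:dpi} appears to be introduced as background but is never explicitly used.
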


\begin{lemma}[Contraction of the Wasserstein distance]
\label{lemma:w}
Let $f:\zzspace\to\yyspace$ and $C > 0$ be a constant such that $\|f\|_L\leq C$. For any distributions $\dist$, $\dist'$ over $\zzspace$, $W_1(f_\sharp\dist, f_\sharp\dist')\leq C\cdot W_1(\dist, \dist')$.
\end{lemma}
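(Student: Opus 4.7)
The plan is to prove the contraction inequality using the Kantorovich--Rubinstein dual formulation of the Wasserstein distance already recorded in the preliminaries, namely $W_1(\mu,\nu) = \sup_{\|h\|_L\leq 1} \left|\int h\, d\mu - \int h\, d\nu\right|$. First, I would apply this identity to the pair of pushforward measures $(f_\sharp\dist, f_\sharp\dist')$ on $\yyspace$, expressing $W_1(f_\sharp\dist, f_\sharp\dist')$ as a supremum over $1$-Lipschitz test functions $h:\yyspace\to\RR$.

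The bridge from $\yyspace$ back to $\zzspace$ is the standard change of variables under pushforward: for any bounded measurable $h:\yyspace\to\RR$, $\int_\yyspace h\, d(f_\sharp\dist) = \int_\zzspace (h\circ f)\, d\dist$, and likewise for $\dist'$. Substituting this into the dual representation, the quantity inside the supremum becomes $\left|\int (h\circ f)\, d\dist - \int (h\circ f)\, d\dist'\right|$, now an integral over $\zzspace$ against the \emph{original} measures, tested against composed functions of the form $h\circ f$.

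The crucial observation is that if $\|h\|_L\leq 1$ and $\|f\|_L\leq C$, then the composition satisfies $\|h\circ f\|_L\leq C$, so $(h\circ f)/C$ is $1$-Lipschitz on $\zzspace$. Factoring the constant $C$ out of the supremum and then enlarging the feasible set from $\{(h\circ f)/C : \|h\|_L\leq 1\}$ to the full set of $1$-Lipschitz functions on $\zzspace$, and finally reidentifying the resulting expression as $W_1(\dist, \dist')$ via the dual formulation again, yields the desired bound $W_1(f_\sharp\dist, f_\sharp\dist') \leq C\cdot W_1(\dist, \dist')$.

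The argument is essentially mechanical and I do not expect a genuine obstacle; the two points that deserve a line of care are the supremum manipulations. Pulling $C$ outside is immediate from homogeneity of the absolute value, and the final inequality uses monotonicity of $\sup$ under set enlargement, for which one must verify that every feasible test function in the rewritten problem is indeed $1$-Lipschitz on $\zzspace$ --- exactly the Lipschitz-composition bound above. Measurability of $h\circ f$ and integrability (needed to invoke the change-of-variables identity) follow from the continuity of $f$ implicit in the Lipschitz hypothesis together with the boundedness of $\zzspace$ assumed in the preliminaries.
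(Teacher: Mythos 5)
Your proof is correct and follows essentially the same route as the paper's: the Kantorovich--Rubinstein dual formulation, the change of variables $\int h\,d(f_\sharp\dist)=\int (h\circ f)\,d\dist$, and the observation that $\|h\circ f\|_L\leq C$ lets you enlarge the supremum to all $C$-Lipschitz (equivalently, rescaled $1$-Lipschitz) test functions on $\zzspace$. The only cosmetic difference is that the paper keeps the enlarged class as $\{h:\|h\|_L\leq C\}$ and identifies the supremum with $C\,W_1(\dist,\dist')$ by homogeneity, rather than normalizing by $C$ first.
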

\begin{proof}
We use the dual representation of the Wasserstein distance to prove this lemma:
\begin{align*}
    W_1(f_\sharp\dist, f_\sharp\dist') =&~ \sup_{\|f'\|_L\leq 1}\left|\int f'~d(f_\sharp\dist) - \int f'~d(f_\sharp\dist')\right| \\
    =&~ \sup_{\|f'\|_L\leq 1}\left|\int f'\circ f~d\dist - \int f'\circ f~d\dist'\right| \\
    \leq&~ \sup_{\|h\|_L\leq C} \left|\int h~d\dist - \int h~d\dist'\right| \\
    =&~ CW_1(\dist, \dist'),
\end{align*}
where the inequality is due to the fact that for $\|f'\|_L\leq 1$, $\|f'\circ f\|_L\leq \|f'\|_L\cdot \|f\|_L = C$.
\end{proof}
The following fact will also be used in the proof of Theorem~\ref{thm:lowerbound}.
\begin{proposition}
Let $Y$ and $Y'$ be two Bernoulli random variables with distributions $\dist$ and $\dist'$. Then $W_1(\dist, \dist') = |\Pr(Y = 1) - \Pr(Y' = 1)|$.
\end{proposition}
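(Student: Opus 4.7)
The plan is to apply the dual (Kantorovich--Rubinstein) representation of the Wasserstein distance directly, since the sample space $\{0,1\}$ is so small that the Lipschitz constraint collapses to a single scalar inequality. Let $p := \Pr(Y=1)$ and $p' := \Pr(Y'=1)$, so that $\dist$ and $\dist'$ are fully described by $p$ and $p'$.

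First, I would observe that for any real-valued function $f$ on $\{0,1\}$ endowed with the Euclidean metric, the Lipschitz seminorm is
\[
\|f\|_L \;=\; \frac{|f(1)-f(0)|}{|1-0|} \;=\; |f(1)-f(0)|,
\]
so the constraint $\|f\|_L \leq 1$ in the definition $W_1(\dist,\dist') = \sup_{\|f\|_L \leq 1} \bigl|\int f\,d\dist - \int f\,d\dist'\bigr|$ reduces to $|f(1)-f(0)| \leq 1$. Second, I would compute the integrals explicitly:
\[
\int f\,d\dist - \int f\,d\dist' \;=\; \bigl(p\,f(1) + (1-p)\,f(0)\bigr) - \bigl(p'\,f(1) + (1-p')\,f(0)\bigr) \;=\; (f(1)-f(0))(p-p').
\]

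Taking absolute values and supremizing over admissible $f$, I get $\sup_{|f(1)-f(0)|\leq 1}|f(1)-f(0)|\,|p-p'| = |p-p'|$, with the supremum attained by choosing $f(1)-f(0) = \operatorname{sgn}(p-p')$ (for instance $f(x) = x$ or $f(x) = -x$). This yields $W_1(\dist,\dist') = |p-p'| = |\Pr(Y=1) - \Pr(Y'=1)|$, as claimed.

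There is no real obstacle here; the only subtlety is to remember that the Lipschitz seminorm on a two-point metric space is completely determined by the difference of values at the two points, so the sup over 1-Lipschitz functions is achieved by a trivial affine function. One could alternatively prove the same statement via the primal formulation by noting that for Bernoulli variables $|X-X'| = \mathbb{I}[X \neq X']$, so $W_1(\dist,\dist') = \inf_\gamma \Pr_\gamma(X \neq X')$, and exhibit the optimal maximal coupling achieving $|p-p'|$; I would mention this only if a primal-side argument were needed elsewhere, otherwise the dual proof above is the cleaner route.
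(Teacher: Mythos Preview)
Your proof is correct, and it takes a genuinely different route from the paper's own argument. The paper works on the \emph{primal} side: it writes the transport cost as a $2\times 2$ linear program with variables $\gamma_{ij}$, observes that the objective reduces to $\gamma_{01}+\gamma_{10}$, uses the marginal constraints to show $|\gamma_{01}-\gamma_{10}| = |\Pr(Y=1)-\Pr(Y'=1)|$, then bounds $\gamma_{01}+\gamma_{10}\geq |\gamma_{01}-\gamma_{10}|$ and exhibits an explicit optimal coupling. You instead work on the \emph{dual} side via Kantorovich--Rubinstein, noting that on the two-point space $\{0,1\}$ the Lipschitz constraint collapses to $|f(1)-f(0)|\leq 1$, so the supremum is a one-variable optimization that is read off immediately. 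Your argument is shorter and uses exactly the dual definition of $W_1$ already introduced in the paper's preliminaries; the paper's argument, while longer, yields the optimal coupling $\gamma^*$ explicitly, which could be handy if a constructive transport plan were needed downstream (it is not, here). Your closing remark about the primal alternative via the maximal coupling is accurate and matches the spirit of what the paper actually does.
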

\begin{proof}
Since both $Y$ and $Y'$ are Bernoulli random variables taking values in $\{0, 1\}$, we solve the following linear program to compute $W_1(\dist, \dist')$ according to the primal definition of the Wasserstein distance. Define $D\in\RR^{2\times 2}$ as $D_{ij} = |i - j|$ to be the distance matrix between $Y$ and $Y'$. Then the solution of the following linear program (LP) gives $W_1(\dist, \dist')$:
\begin{equation}
    \begin{aligned}
    & \min_{\gamma\in\RR^{2\times 2}} && \tr(\gamma D) = \sum_{i,j=0}^1 \gamma_{ij}D_{ij}\\
    & \text{subject to} && \sum_{i,j=0}^1 \gamma_{ij} = 1, \gamma_{ij} \geq 0, \sum_{j=0}^1 \gamma_{ij} = \Pr(Y = i), \sum_{i=0}^1 \gamma_{ij} = \Pr(Y' = j).
\end{aligned}
\end{equation}
The objective function $\tr(\gamma D)$ is the transportation cost of a specific coupling $\gamma$, hence the optimal $\gamma^*$ corresponds to the optimal transport between $Y$ and $Y'$. For this simple LP, we have
\begin{align*}
    \sum_{i,j=0}^1 \gamma_{ij}D_{ij} = \gamma_{01} + \gamma_{10}.
\end{align*}
On the other hand, the constraint set gives
\begin{align*}
    \gamma_{00} + \gamma_{01} &= \Pr(Y = 0);\qquad \gamma_{00} + \gamma_{10} = \Pr(Y' = 0); \\
    \gamma_{10} + \gamma_{11} &= \Pr(Y = 1);\qquad \gamma_{01} + \gamma_{11} = \Pr(Y' = 1);
\end{align*}
From which we observe
\begin{equation*}
    |\gamma_{01} - \gamma_{10}| = |\Pr(Y = 1) - \Pr(Y'=1)| = |\Pr(Y = 0) - \Pr(Y' = 0)|,
\end{equation*}
hence,
\begin{align*}
    (\gamma_{01} + \gamma_{10}) + |\gamma_{01} - \gamma_{10}| &= 2 \max\{\gamma_{01}, \gamma_{10}\} \\
                                                              &\geq 2|\gamma_{01} - \gamma_{10}|,
\end{align*}                                
which implies $\forall \gamma$ that is feasible,
\begin{equation*}
    \tr(\gamma D) = \gamma_{01} + \gamma_{10} \geq 2|\gamma_{01} - \gamma_{10}| - |\gamma_{01} - \gamma_{10}| = |\Pr(Y = 1) - \Pr(Y'=1)|.
\end{equation*}
To see that this lower bound is attainable, without loss of generality, assuming that $\Pr(Y = 1) \geq \Pr(Y' = 1)$, the following $\gamma^*$ suffices:
\begin{equation*}
    \gamma^*_{00} = \Pr(Y = 0); \quad\gamma^*_{01} = 0; \quad\gamma^*_{10} = \Pr(Y = 1) - \Pr(Y' = 1); \quad\gamma^*_{11} = \Pr(Y' = 1).\qedhere
\end{equation*}
\end{proof}
With the above tools, we are ready to prove Theorem~\ref{thm:lowerbound}:
\lowerbound*
\begin{proof}
Let $g_\sharp\dist$ be the induced (pushforward) distribution of $\dist$ under the GNN feature encoder $g$. To simplify the notation, we also use $\dist_0$ and $\dist_1$ to denote the conditional distribution of $\dist$ given $A = 0$ and $A = 1$, respectively. Since $h:\zzspace \to \{0, 1\}$ is the task predictor, it follows that $(h\circ g)_\sharp\dist_0$ and $(h\circ g)_\sharp\dist_1$ induce two distributions over $\{0, 1\}$. Recall that $W_1(\cdot, \cdot)$ is a distance metric over the space of probability distributions, by a chain of triangle inequalities, we have:
\begin{align*}
  W_1(\dist(Y\mid A = 0), & \dist(Y\mid A = 1)) \leq W_1(\dist(Y\mid A = 0), (h\circ g)_\sharp\dist_0) \\
  &+ W_1((h\circ g)_\sharp\dist_0, (h\circ g)_\sharp\dist_1) + W_1((h\circ g)_\sharp\dist_1, \dist(Y\mid A = 1)).
\end{align*}
Now by Lemma~\ref{lemma:w}, we have
\begin{equation*}
  W_1((h\circ g)_\sharp\dist_0, (h\circ g)_\sharp\dist_1) \leq C\cdot W_1(g_\sharp\dist_0, g_\sharp\dist_1).
\end{equation*}
Next we bound $W_1(\dist(Y\mid A = a), (h\circ g)_\sharp\dist_a),~\forall a\in\{0, 1\}$:
\begin{align*}
  W_1(\dist(Y\mid A = a), (h\circ g)_\sharp\dist_a) &= |\Pr_\dist(Y = 1\mid A = a) - \Pr_\dist((h\circ g)(X) = 1\mid A = a)| \\ & \hspace{1.5em} \text{(Lemma~\ref{lemma:w}, Both $Y$ and $h(g(X))$ are binary)}\\
  &= |\Exp_\dist[Y\mid A = a] - \Exp_\dist[(h\circ g)(X)\mid A = a]| \\
  &\leq \Exp_\dist[|Y - (h\circ g)(X)|\mid A = a] && \text{(Triangle inequality)} \\
  &= \Pr_\dist(Y \neq (h\circ g)(X)\mid A = a) \\
  &\leq \eps_{Y\mid A= a}(h\circ g),
\end{align*}
where the last inequality is due to the fact that the cross-entropy loss is an upper bound of the 0-1 binary loss. Again, realizing that both $\dist(Y\mid A = 0)$ and $\dist(Y\mid A = 1)$ are Bernoulli distributions, applying Lemma~\ref{lemma:w}, we have
\begin{equation*}
    W_1(\dist(Y\mid A = 0), \dist(Y\mid A = 1)) = \delta_{Y\mid A}.
\end{equation*}
Combining all the inequalities above, we establish the following inequality:
\begin{equation*}
 \eps_{Y\mid A = 0}(h\circ g) + \eps_{Y\mid A = 1}(h\circ g) \geq \delta_{Y\mid A} - C\cdot W_1(g_\sharp\dist_0, g_\sharp\dist_1).
\end{equation*}

For the second part of the inequality, since $\sup_{z\in\zzspace}\|z\|\leq R$, the diameter of $\zzspace$ is bounded by $2R$. Now using the classic result between the TV distance and the Wasserstein distance over a metric space~\citep{gibbs2002choosing}, we have
\begin{equation*}
    W_1(g_\sharp\dist_0, g_\sharp\dist_1) \leq 2R\cdot \dtv(g_\sharp\dist_0, g_\sharp\dist_1),
\end{equation*}
To complete the proof, we show that $\dtv(g_\sharp\dist_0, g_\sharp\dist_1) = \Adv_{\dist}(\FF_A)$: since $\FF_A$ contains all the binary predictors,
\begin{align*}
  \dtv(g_\sharp\dist_0, g_\sharp\dist_1) &= \sup_{E\text{ is measurable}}\left|\Pr_{g_\sharp\dist_0}(E) - \Pr_{g_\sharp\dist_1}(E)\right| \\
  &= \sup_{f_E\in\FF_A}\left|\Pr_{g_\sharp\dist_0}(f_E(Z) = 1) - \Pr_{g_\sharp\dist_1}(f_E(Z) = 1)\right| \\
  &= \sup_{f_E\in\FF_A}\left|\Pr_{g_\sharp\dist}(f_E(Z) = 1\mid A= 0) - \Pr_{g_\sharp\dist}(f_E(Z) = 1\mid A = 1)\right| \\
  &= \Adv_{\dist}(\FF_A),
\end{align*}
where in the second equation above $f_E(\cdot)$ is the characteristic function of the event $E$. Now combining the above two inequalities together, we have:
\begin{align*}
 \eps_{Y\mid A = 0}(h\circ g) + \eps_{Y\mid A = 1}(h\circ g) & \geq \delta_{Y\mid A} - C\cdot W_1(g_\sharp\dist_0, g_\sharp\dist_1) \\
 & \geq \delta_{Y\mid A} - 2RC\cdot \Adv_{\dist}(\FF_A). \qedhere
\end{align*}
\end{proof}

Corollary~\ref{cor:lowerbound} then follows directly from Theorem~\ref{thm:lowerbound}:
\glowerbound*
\begin{proof}
Realize that 
\begin{align*}
  \eps_Y(h\circ g) &= \Pr_\dist(A = 0) \cdot \eps_{Y\mid A = 0}(h\circ g) + \Pr_\dist(A = 1) \cdot \eps_{Y\mid A = 1}(h\circ g) \\
  &\geq \min\{\Pr_\dist(A = 0), \Pr_\dist(A = 1)\}\cdot \left(\eps_{Y\mid A = 0}(h\circ g) + \eps_{Y\mid A = 1}(h\circ g)\right).
\end{align*}
Applying the lower bound in Theorem~\ref{thm:lowerbound} then completes the proof.
\end{proof}

The following lemma about the inverse binary entropy will be useful in the proof of Theorem~\ref{thm:guarantee}:
\begin{lemma}[\citet{calabro2009exponential}]
  Let $H_2^{-1}(s)$ be the inverse binary entropy function for $s\in[0, 1]$, then $H_2^{-1}(s)\geq s / 2\lg (6/s)$.
\label{lemma:inventropy}
\end{lemma}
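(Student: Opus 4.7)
The plan is to prove $H_2^{-1}(s) \geq s/(2\lg(6/s))$ for $s \in [0,1]$ by first producing a clean upper bound on $H_2(p)$ in terms of $p$, then substituting $s = H_2(p)$ and reducing the target to a one-variable inequality in $p$. I would work on the principal branch $p = H_2^{-1}(s) \in [0,1/2]$, on which $H_2$ is strictly increasing. Writing $s = H_2(p)$, the desired inequality is equivalent to
\[
s \cdot 2^{s/(2p)} \;\leq\; 6,
\]
so the goal becomes bounding this quantity uniformly for $p \in (0, 1/2]$ (the endpoints $s=0$ and $s=1$ can be handled separately by inspection).

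First I would establish the auxiliary bound $H_2(p) \leq p\lg(e/p)$ on $[0,1]$. After expanding $H_2$ and canceling the $-p\lg p$ terms, this reduces to showing $(1-p)\ln\tfrac{1}{1-p} \leq p$; defining $g(p) = p + (1-p)\ln(1-p)$, the identities $g(0)=0$ and $g'(p) = -\ln(1-p) \geq 0$ immediately give the bound.

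Plugging this into the target, two observations combine: from $s \leq p\lg(e/p)$ one gets $s/(2p) \leq \tfrac{1}{2}\lg(e/p)$, hence $2^{s/(2p)} \leq \sqrt{e/p}$; multiplying by $s \leq p\lg(e/p)$ yields
\[
s \cdot 2^{s/(2p)} \;\leq\; \sqrt{ep}\,\lg(e/p).
\]
The whole problem thus reduces to verifying $\sqrt{ep}\,\lg(e/p) \leq 6$ for $p \in (0, 1/2]$. I would handle this via the substitution $u = \lg(e/p)$, turning the expression into $e \cdot u \cdot 2^{-u/2}$, whose unique critical point lies at $u = 2\lg e$ (i.e.\ $p = 1/e$, which falls in $(0,1/2]$) with maximum value $2\lg e \approx 2.885$. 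Since this is comfortably below $6$, rearranging $s \cdot 2^{s/(2p)} \leq 6$ to $p \geq s/(2\lg(6/s))$ completes the proof.

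The main obstacle is the final one-variable maximization of $\sqrt{ep}\,\lg(e/p)$; the calculus is routine, but the slack matters because the intermediate bound $H_2(p) \leq p\lg(e/p)$ is tight only up to a constant near $p = 1/2$. A cruder auxiliary bound (e.g.\ using a constant larger than $e$ inside the logarithm) would fail to yield the constant $6$ inside $\lg(6/s)$, so choosing precisely $p\lg(e/p)$ in Step 1 is the key structural decision of the argument.
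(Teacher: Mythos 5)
The paper does not actually prove this lemma -- it is imported verbatim from \citet{calabro2009exponential} and used as a black box in the proof of Theorem~\ref{thm:guarantee} -- so there is no internal proof to compare against; what matters is whether your self-contained argument is sound, and it is. The reduction to $s\cdot 2^{s/(2p)}\le 6$ is a correct restatement of the claim on the principal branch (for $s>0$ the chain $p\ge s/(2\lg(6/s))\iff\lg(6/s)\ge s/(2p)\iff s\cdot 2^{s/(2p)}\le 6$ is an honest equivalence since $\lg(6/s)>0$ and $p>0$, and $s=0$ is trivial). The auxiliary bound $H_2(p)\le p\lg(e/p)$ reduces exactly as you say to $(1-p)\ln\tfrac{1}{1-p}\le p$, which your monotonicity argument for $g$ establishes. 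Using that bound once for the prefactor $s$ and once in the exponent is legitimate because both quantities being bounded are nonnegative, so the product of the two upper bounds $p\lg(e/p)$ and $\sqrt{e/p}$ is again an upper bound, giving $s\cdot 2^{s/(2p)}\le\sqrt{ep}\,\lg(e/p)$. Finally, the substitution $u=\lg(e/p)$ turns the right-hand side into $e\,u\,2^{-u/2}$, whose global maximum over $u>0$ is attained at $u=2\lg e$ (corresponding to $p=1/e\in(0,1/2]$) with value $2/\ln 2=2\lg e\approx 2.885<6$, so the verification goes through with room to spare -- indeed your computation shows the constant $6$ in the lemma is far from tight and could be replaced by any constant exceeding $2\lg e$ (after re-checking the rearrangement with the new constant). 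The one presentational quibble is that $s=1$ needs no separate treatment, since $p=1/2$ lies in the domain you analyze; only $s=0$ requires the limiting convention.
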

With the above lemma, we are ready to prove Theorem~\ref{thm:guarantee}.
\guarantee*
\begin{proof}
  To ease the presentation we define $Z = Z^*$. To prove this theorem, let $E$ be the binary random variable that takes value 1 iff $A\neq f(Z)$, i.e., $E = \ind(A\neq f(Z))$. Now consider the joint entropy of $A, f(Z)$ and $E$. On one hand, we have:
  \begin{equation*}
      H(A, f(Z), E) = H(A, f(Z)) + H(E\mid A, f(Z)) = H(A, f(Z)) + 0 = H(A\mid f(Z)) + H(f(Z)).
  \end{equation*}
  Note that the second equation holds because $E$ is a deterministic function of $A$ and $f(Z)$, that is, once $A$ and $f(Z)$ are known, $E$ is also known, hence $H(E\mid A, f(Z))  = 0$. On the other hand, we can also decompose $H(A, f(Z), E)$ as follows:
  \begin{equation*}
      H(A, f(Z), E) = H(E) + H(A\mid E) + H(f(Z)\mid A, E).
  \end{equation*} 
  Combining the above two equalities yields
  \begin{equation*}
      H(E, A\mid f(Z)) = H(A\mid f(Z)).
  \end{equation*}
  On the other hand, we can also decompose $H(E, A\mid f(Z))$ as 
  \begin{equation*}
    H(E, A \mid f(Z)) = H(E\mid f(Z)) + H(A\mid E, f(Z)).
  \end{equation*}
  Furthermore, since conditioning cannot increase entropy, we have $H(E\mid f(Z))\leq H(E)$, which further implies
  \begin{equation*}
      H(A\mid f(Z)) \leq H(E) + H(A\mid E, f(Z)).
  \end{equation*}
  Now consider $H(A\mid E, f(Z))$. Since $A\in\{0, 1\}$, by definition of the conditional entropy, we have:
  \begin{align*}
      H(A\mid E, f(Z)) = \Pr(E = 1) H(A\mid E = 1, f(Z)) + \Pr(E = 0) H(A\mid E = 0, f(Z)) = 0 + 0 = 0.
  \end{align*}
  To lower bound $H(A\mid f(Z))$, realize that 
  \begin{equation*}
      I(A; f(Z)) + H(A\mid f(Z)) = H(A) = I(A; Z) + H(A\mid Z).
  \end{equation*}
  Since $f(Z)$ is a randomized function of $Z$ such that $A\perp f(Z) \mid Z$, due to the celebrated data-processing inequality, we have $I(A;f(Z))\leq I(A; Z)$, which implies
  \begin{equation*}
      H(A\mid f(Z)) \geq H(A\mid Z). 
  \end{equation*}
  Combine everything above, we have the following chain of inequalities hold:
  \begin{equation*}
      H(A\mid Z) \leq H(A\mid f(Z)) \leq H(E) + H(A\mid E, f(Z)) = H(E),
  \end{equation*}
  which implies
  \begin{equation*}
      \Pr(A\neq f(Z)) = \Pr(E = 1) \geq H_2^{-1}(H(A\mid Z)),
  \end{equation*}
  where $H_2^{-1}(\cdot)$ denotes the inverse function of the binary entropy $H(t)\defeq -t\log t - (1 - t)\log(1-t)$ when $t\in[0, 1]$. We then apply Lemma~\ref{lemma:inventropy} to further lower bound the inverse binary entropy function by
  \begin{equation*}
    \Pr(A\neq f(Z))\geq H_2^{-1}(H(A\mid Z)) \geq H(A\mid Z) / 2\lg(6 / H(A\mid Z)),
  \end{equation*}
completing the proof of the first lower bound. For the second part, realize that 
\begin{align*}
    \Pr(f(Z)\neq A) &= \Pr(A = 0)\Pr(f(Z) = 1\mid A = 0) + \Pr(A = 1)\Pr(f(Z) = 0\mid A = 1) \\
                    &\geq \min\{\alpha, 1-\alpha\}\left(\Pr_{\dist_0}(f(Z) = 1) + \Pr_{\dist_1}(f(Z) = 0)\right).
\end{align*}
Now to lower bound $\Pr_{\dist_a}(f(Z) = 1-a)$, we apply the same argument in the proof of Theorem~\ref{thm:lowerbound}, which gives us
  \begin{align*}
  \Pr_{\dist_a}(f(Z) = 1-a) &= \Pr_\dist(f(Z)\neq A\mid A = a) \\
  &= \Exp_\dist[|f(Z) - A|\mid A = a] \\
  &\geq |\Exp_\dist[f(Z)\mid A = a] - \Exp_\dist[A \mid A = a]| \\
  &= |\Pr_\dist(f(Z) = 1\mid A = a) - \Pr_\dist(A = 1\mid A = a)| \\
  &= W_1(\dist_a(f(Z)), A\mid A = a) && \text{(Lemma~\ref{lemma:w}, Both $A$ and $f(Z)$ are binary)}.
\end{align*}
As a last step, using the triangle inequality of $W_1(\cdot, \cdot)$ and Lemma~\ref{lemma:w}, we have
\begin{equation*}
    W_1(\dist_0(f(Z)), A\mid A = 0) + W_1(\dist_1(f(Z)), A\mid A = 1) \geq \delta_{A\mid A} - C W_1^* = 1 - CW_1^*.
\end{equation*}
Combining all the steps above yields
\begin{align*}
    \Pr(f(Z)\neq A) &\geq \min\{\alpha, 1-\alpha\}\left(\Pr_{\dist_0}(f(Z) = 1) + \Pr_{\dist_1}(f(Z) = 0)\right) \\
                    &\geq \min\{\alpha, 1-\alpha\}\left(W_1(\dist_0(f(Z)), A\mid A = 0) + W_1(\dist_1(f(Z)), A\mid A = 1)\right) \\
                    &\geq \min\{\alpha, 1-\alpha\}(1 - CW_1^*),
\end{align*}
which completes the second part of the proof.
\end{proof}

\section{Experimental Setup Details}
\label{sec:setup}

\paragraph{Optimization}
For the objective function, we selected block gradient descent-ascent to optimize our models. In particular, we took advantage of the \textsf{optim} module in PyTorch \citep{pytorch} by designing
a custom gradient-reversal layer, first introduced by~\citep{ganin2016domain}, to be placed between the attacker and the GNN layer we seek to defend. 
The implementation of the graident-reversal layer can be found in the Appendix. During training, we would designate two \textsf{Optimizer} instances, 
one having access to only task-related parameters, and the other having access to attack-related parameters and parameters associated with GNN defense. 
We could then call the \textsf{.step()} method on the optimizers in an alternating fashion to train the entire network, 
where the gradient-reversal layer would carry out both gradient descent (of the attacker) and ascent (of protected layers) as expected. 
Tradeoff control via $\lambda$ is achieved through multiplying the initial learning rate of the adversarial learner by the desired factor.
For graphs that are harder to optimize, we introduce pre-training as the first step in the pipeline, where we train the encoder and the task decoder for a few
epochs before introducing the adversarial learner.

\paragraph{Movielens 1M}

The main dataset of interest for this work is Movielens-1M \footnote{\url{https://grouplens.org/datasets/movielens/1m/}}, a benchmarking dataset in evaluating recommender systems, developed by \citep{ml1m}. 
In this dataset, nodes are either users or movies, and the type of edge represents the rating the user assigns to a movie.
Adapting the formulation of \citep{compfair}, we designate the main task as edge prediction and designate the adversarial task as extracting user-related information
from the GNN embedding using multi-layer perceptrons with LeakyReLU functions \citep{leakyrelu} as nonlinearities. 
Training/test splits are creating using a random 90/10 shuffle.
The network encoder consists of a trainable embedding layer followed by neighborhood aggregation layers.
Node-level embeddings have a dimension of $20$, and the decoder is a naive bilinear decoder, introduced in \citep{gcmc}.
Both the adversarial trainers and the main task predictors are trained with separate Adam optimizers with learning rate set to $0.01$.
Worst-case attackers are trained for $30$ epochs with a batch-size $256$ nodes before the original model is trained for $25$ epochs with a batch-size of $8,192$ edges.

\paragraph{Planetoid}

Planetoid \footnote{Raw data available at \url{https://github.com/kimiyoung/planetoid/tree/master/data}. For this work, we used the wrapper provided by \url{https://pytorch-geometric.readthedocs.io/en/latest/_modules/torch_geometric/datasets/planetoid.html}.} is the common name for three datasets (Cora, CiteSeer, Pubmed) used in benchmarks of graph neural networks in the literature, introduced by \citep{planetoid}.
Nodes in these datasets represent academic publications, and edges represent citation links. Since the Cora dataset is considered to be small to have any practical implications in the performance of our algorithm, we report only the results of CiteSeer and Pubmed. Similar to Movielens, the main task is edge prediction, and the attacker
will seek to predict node attributes from GNN-processed embeddings. The network architecture is message-passing layers connected with ReLU nonlinearities, and
both the decoder and attacker are also single-layer message-passing modules. Regarding training/valid/test splits, we adopt the default split used in the original paper, 
maintained by \citep{torch_geometric}.
The network encoder consists of a trainable embedding layer followed by neighborhood aggregation layers.
Node-level embeddings have a dimension of $64$, and both the adversarial trainers and the main task predictors are trained with separate Adam optimizers with learning rate set to $0.01$.
Worst-case attackers are trained for $80$ epochs with before the original model is trained for $150$ epochs, and the entire graph is fed into the network at once during each epoch.

\paragraph{QM9}

QM9 \footnote{Raw data available at \url{https://s3-us-west-1.amazonaws.com/deepchem.io/datasets/molnet_publish/qm9.zip} and \url{https://ndownloader.figshare.com/files/3195404}} is a dataset used to benchmark machine learning algorithms in quantum chemistry \citep{qm9}, consisting of around $130,000$ molecules
(represented in their spatial information of all component atoms) and $19$ regression targets.
The main task would be to predict the dipole moment $\mu$ for a molecule graph, while the attacker will seek to extract its isotropic polarizability $\alpha$ from the embeddings. 
The encoder is a recurrent architecture consisting of a NNConv \citep{gilmer2017neural} unit, a GRU \citep{gru} unit and a Set2Set \citep{set2set} unit, 
with both the decoder and the attacker (as regressors) 2-layer multi-layer perceptrons with ReLU nonlinearities.
The training/valid/test is selected in the following manner: the order of samples is randomly shuffled at first, then the first 10,000 and 10,000 - 20,000 samples are selected
for testing and validation respectively, and the remaining samples are used for training. 
Preprocessing is done with scripts provided by \citep{torch_geometric} \footnote{Available at \url{https://pytorch-geometric.readthedocs.io/en/latest/_modules/torch_geometric/datasets/qm9.html}}
, using functions from \citep{rdkit}.
Node-level embeddings have a dimension of $64$, and both the adversarial trainers and the main task predictors are trained with separate Adam optimizers with learning rate set to $0.001$.
Worst-case attackers are trained for $30$ epochs with before the original model is trained for $40$ epochs with a batch-size of $128$ molecular graphs.

\paragraph{FB15k-237/WN18RR}

These two datasets are benchmarks for knowledge base completion: while FB15k-237 \footnote{\url{https://www.microsoft.com/en-us/download/details.aspx?id=52312}} is semi-synthetic with nodes as common entities, 
WN18RR \footnote{\url{https://github.com/TimDettmers/ConvE}} is made by words found in the thesaurus. 
Our formulation is as follows: while the main task from both datasets is edge prediction, the attackers' goals are different:

\begin{itemize}
  \item For FB15k-237, we took node-level attributes from \citep{moon} \footnote{\url{https://github.com/cmoon2/knowledge_graph}}, and task the attacker with predicting the 50-most frequent labels. Since a node in FB15k-237 may have 
  multiple labels associated with it, adversarial defense on this may be seen as protecting sets of node-level attributes, in contrast to single-attribute defense in other experimental
  settings.
  \item For WN18RR, we consider two attributes for a node (as a word): its word sense (sense greater than 20 are considered as the same heterogeneous class), and part-of-speech tag. 
  The labels are obtained from \citep{wn18rr-mlj13} \footnote{\url{https://everest.hds.utc.fr/doku.php?id=en:smemlj12}}.
\end{itemize}

As for the architecture, we used a modified version of the CompGCN paper \citep{compgcn}, where the attacker has access to the output of the CompGCN layer (of dimension $200$), 
and the original task utilizes the ConvE model for the decoder. 
The training/valid/test split also aligns with the one used in the CompGCN paper. On both datasets, the adversarial trainers and main task predictors are trained with separate Adam optimizers with learning rate set to $0.001$.
Worst-case attackers are trained for $30$ epochs with a batch-size of $128$ nodes before the original model is trained for $120$ epochs after $35$ epochs of pre-training, with a batch-size of $128$ nodes.

\paragraph{Computing Infrastructure}

All models are trained with NVIDIA GeForce\textregistered \text{ } RTX 2080 Ti graphics processing units (GPU) with 11.0 GB GDDR6 memory on each card, and non-training-related operations are performed
using Intel\textregistered \text{ } Xeon\textregistered \text{ } Processor E5-2670 (20M Cache, 2.60 GHz).

\newpage

\paragraph{Estimated Average Runtime}
Below are the averge training time per epoch for each models used in the main text, when the training is performed on the computing infrastructure mentioned above:

\begin{table}[htb]
  \centering
  \label{runtime}
  \begin{tabular}{l l l}
    \toprule
    \textsc{Dataset}      & Encoder & $t$ \\ [0.5ex] 
    \midrule
    \textsc{CiteSeer}     & ChebNet & $0.0232$s  \\
                          & GCN & $0.0149$s  \\
                          & GAT & $0.0282$s  \\
    \textsc{Pubmed}       & ChebNet & $0.0920$s  \\
                          & GCN & $0.0824$s  \\
                          & GAT & $0.129$s  \\
    \textsc{QM9}          & MPNN & $199.25$s  \\
    \textsc{Movielens-1M} 
                          & GCN & $12.05$s  \\
                          & GAT & $45.86$s  \\
    \textsc{FB15k-237}    & CompGCN & $463.39$s  \\
    \textsc{WN18RR}       & CompGCN & $181.55$s  \\
    \bottomrule
   \end{tabular}
   \vspace*{0em}
\end{table}

\section{Degredation of RMSE on Movielens-1M dataset Regarding Neighborhood Attack}

This is a supplementary figure for the neighborhood attack experiments introduced in the main section. Band represents 95\% confidence interval over five runs.
\begin{figure}[H]
  \centering
  \includegraphics[width=0.5\linewidth]{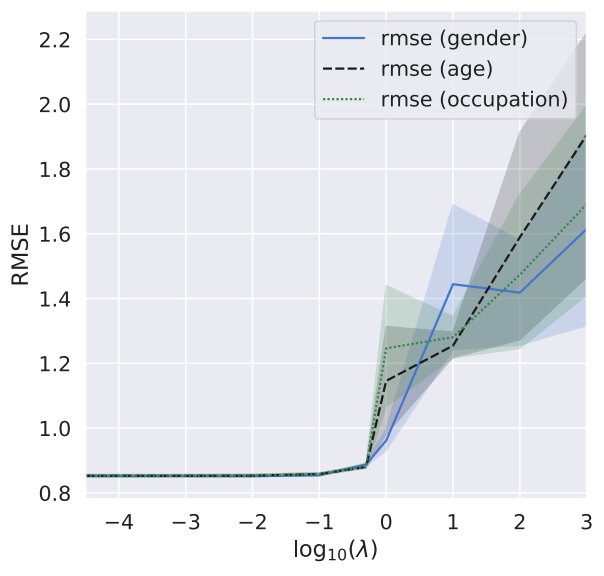}
  \label{neighborrmse}
\end{figure}

\newpage
\section{N-Hop Algorithm for Neighborhood Defense}
Intuitively, this algorithm greedily constructs a path of length $n$ by uniformly picking a neighbor from the current end of the path and checking if the node
has existed previously in the path, avoiding formation of cycles. Worst-case running time of this algorithm is $O(n^2)$, because in each step of the main loop,
the algorithm performs $O(n)$ checks in the worst case scenario.
\begin{algorithm}[htb]
    \label{alg:nhop}
    \caption{Monte-Carlo Probabilistic N-Hop}
    \begin{algorithmic}
    \STATE  {\bfseries Input: } {$G=(V,E)$: undirected graph (via adjacency list); $v \in V$: starting node; $n \geq 1$: hop}
    \STATE  {\bfseries Output: } {On success: $v' \in V$ such that $d(v, v') = n$ or $\textsc{no}$ if such vertex doesn't exist;
              On failure: $v' \in V$ such that $1 \leq d(v, v') \leq n$ 
                             or $\textsc{no}$ if such vertex doesn't exist}
    \STATE $V = \emptyset$ \COMMENT{Initial path is empty}
    \STATE $t = 0$ 
    \STATE $v' = v$
  
     \REPEAT
      \STATE $S = [  \mathcal{N}(v') ]$ \COMMENT{$O(1)$ time by adjacency list}
      \STATE $i = \text{RandInt}(0, |S|)$ \COMMENT{$O(1)$ uniform random sample (without replacement)}
      \STATE $e = S\text{.pop}(i)$
      \REPEAT
        \STATE {$i = \text{RandInt}(0, |S|)$}
        \STATE {$e = S\text{.pop}(i)$}
      \UNTIL {$\neg(e \in V$ \text{and} $S \neq [])$} \COMMENT{Loop runs at most $O(n)$ times}
  
      \IF{$e \notin V$}   
        \STATE $V = V \cap \{ e\}$
        \STATE $v' = e$ 
      \ELSE 
       \STATE reject with \textsc{no} \COMMENT{Current path not satisfiable, reject}
      \ENDIF
      \STATE $t = t+1$
     \UNTIL $t >= n$
      \STATE accept with $v'$
    \end{algorithmic}
  \end{algorithm}

\end{document}